\documentclass{article}

\usepackage[utf8]{inputenc} \usepackage[T1]{fontenc}    \usepackage{url}            \usepackage{booktabs}       \usepackage{amsfonts}       \usepackage{nicefrac}       \usepackage{microtype}      \usepackage{times}
\usepackage{mathtools}
\usepackage{graphicx} 

\usepackage{algorithm}
\usepackage{algorithmic}
\usepackage{multirow}
\usepackage{amsthm}
\usepackage{mathrsfs}
\usepackage{graphicx}
\usepackage{xspace}
\usepackage{float}
\usepackage{wrapfig}
\usepackage{enumitem}
\setitemize{noitemsep,topsep=0pt,parsep=2pt,partopsep=0pt}
\usepackage{natbib}

\usepackage{multibib} \newcites{sup}{Supplementary References}
\usepackage{hyperref}       \usepackage{xcolor}

\DeclareMathOperator*{\argmin}{arg\,min}
\DeclareMathOperator*{\argmax}{arg\,max}
\DeclareMathOperator{\conv}{conv}

\DeclareMathOperator{\lin}{lin}
\DeclareMathOperator{\diam}{\mathrm{diam}}
\DeclareMathOperator{\radius}{\mathrm{radius}}

\newcommand{\lmo}{\textsc{LMO}\xspace}

\DeclareMathOperator{\mdw}{mDW(\cA)}

\newcommand{\domain}{\cQ}

\usepackage[accepted]{icml2018}



\def\to{{\,\rightarrow\,}}

\mathchardef\mhyphen="2D



\providecommand{\inner}[2]{{\left\langle#1, #2\right\rangle}}

\providecommand{\brac}[1]{{\left(#1\right)}}
\newcommand{\encase}[1]{{\left[#1\right]}}

\newcommand{\norm}[1]{{ \left\lVert#1\right\rVert }}

\newcommand{\vertiii}[1]{{\left\vert\kern-0.25ex\left\vert\kern-0.25ex\left\vert #1
    \right\vert\kern-0.25ex\right\vert\kern-0.25ex\right\vert}}









\def\bb{{\mathbf{b}}}

\def\bd{{\mathbf{d}}}
\def\be{{\mathbf{e}}}

\def\bs{{\mathbf{s}}}

\def\bv{{\mathbf{v}}}

\def\bx{{\mathbf{x}}}
\def\by{{\mathbf{y}}}
\def\bz{{\mathbf{z}}}

\def\bI{{\mathbf{I}}}

\def\bM{{\mathbf{M}}}

\def\bP{{\mathbf{P}}}

\def\0{{\mathbf{0}}}





\def\bbE{{\mathbb{E}}}

\def\bbP{{\mathbb{P}}}

\def\bbR{{\mathbb{R}}}

\def\cA{\mathcal{A}}

\def\cF{\mathcal{F}}

\def\cH{\mathcal{H}}

\def\cO{\mathcal{O}}

\def\cQ{\mathcal{Q}}

\def\cZ{\mathcal{Z}}




\newtheorem*{rep@theorem}{\rep@title}
\newcommand{\newreptheorem}[2]{%
\newenvironment{rep#1}[1]{%
 \def\rep@title{#2 \ref{##1}}%
 \begin{rep@theorem}}%
 {\end{rep@theorem}}}
\newreptheorem{lemma}{Lemma'}
\newreptheorem{definition}{Definition'}
\newreptheorem{proposition}{Proposition'}
\newreptheorem{theorem}{Theorem'}

\newtheorem{theorem}{Theorem}
\newtheorem{definition}[theorem]{Definition}
\newtheorem{lemma}[theorem]{Lemma}

\renewcommand{\text}[1]{{\textnormal{#1}}}



\icmltitlerunning{On Matching Pursuit and Coordinate Descent}

\begin{document}

\twocolumn[
\icmltitle{On Matching Pursuit and Coordinate Descent}

\icmlsetsymbol{equal}{*}

\begin{icmlauthorlist}
\icmlauthor{Francesco Locatello}{equal,mpi,eth}
\icmlauthor{Anant Raj}{equal,mpi}
\icmlauthor{Sai Praneeth  Karimireddy}{epfl}\\
\icmlauthor{Gunnar R\"{a}tsch}{eth}
\icmlauthor{Bernhard  Sch\"{o}lkopf}{mpi}
\icmlauthor{Sebastian U. Stich}{epfl}
\icmlauthor{Martin Jaggi}{epfl}
\end{icmlauthorlist}

\icmlaffiliation{mpi}{Max Planck Institute for Intelligent Systems, T\"ubingen, Germany}
\icmlaffiliation{eth}{Dept. of Computer Science, ETH Zurich, Zurich, Switzerland}
\icmlaffiliation{epfl}{EPFL, Lausanne, Switzerland}

\icmlcorrespondingauthor{Francesco Locatello}{francesco.locatello@tuebingen.mpg.de}
\icmlcorrespondingauthor{Anant Raj}{anant.raj@tuebingen.mpg.de}

\icmlkeywords{Machine Learning, ICML}

\vskip 0.3in
]

\printAffiliationsAndNotice{\icmlEqualContribution}

\begin{abstract}
Two popular examples of first-order optimization methods over linear spaces are coordinate descent and matching pursuit algorithms, with their randomized variants. While the former targets the optimization by moving along coordinates, the latter considers a generalized notion of directions. Exploiting the connection between the two algorithms, we present a unified analysis of both, providing affine invariant sublinear $\cO(1/t)$ rates on smooth objectives and linear convergence on strongly convex objectives. As a byproduct of our affine invariant analysis of matching pursuit, our rates for steepest coordinate descent are the tightest known. Furthermore, we show the first accelerated convergence rate $\cO(1/t^2)$ for matching pursuit and steepest coordinate descent on convex objectives.
\end{abstract}

\section{Introduction}\label{sec:intro}
In this paper we address the following convex optimization problem:
\begin{equation}\label{eq:MPproblem}
\min_{\bx\in\lin(\cA)} f(\bx) \,,
\end{equation}
where $f$ is a convex function. The minimization is over a linear space, which is parametrized as the set of linear combinations of elements from a given set $\cA$.  These elements of $\cA$ are called \textit{atoms}. In the most general setting, $\cA$ is assumed to be a compact but not necessarily finite subset of a Hilbert space, i.e., a linear space equipped with an inner product, complete in the corresponding norm. Problems of the form~\eqref{eq:MPproblem} are tackled by a multitude of first-order optimization methods and are of paramount interest in the machine learning community~\cite{seber2012linear, meir2003introduction, scholkopf2001learning,menard2018applied,tibshirani2015general}.  

Traditionally, matching pursuit (MP) algorithms were introduced to solve the inverse problem of representing a measured signal by a sparse combination of atoms from an over-complete basis~\cite{Mallat:1993gu}. In other words, the solution of the optimization problem \eqref{eq:MPproblem} is formed as a linear combination of few of the elements of the atom set~$\cA$ -- i.e. a sparse approximation.
At each iteration, the MP algorithm picks a direction from $\cA$ according to the gradient information, and takes a step. This procedure is not limited to atoms of fixed dimension. Indeed, $\lin(\cA)$ can be an arbitrary linear subspace of the ambient space and we are interested in finding the minimizer of $f$ only on this domain, see e.g.~\cite{gillis2018fast}. Conceptually, MP stands in the middle between coordinate descent (CD) and gradient descent, as the algorithm is allowed to descend the function along a prescribed set of directions which does not necessarily correspond to coordinates.
This is particularly important for machine learning applications as it translates to a sparse representation of the iterates in terms of the elements of~$\cA$ while maintaining the convergence guarantees~\cite{LacosteJulien:2013ue,locatello2017greedy}.

The first analysis of the MP algorithm in the optimization sense to solve the template~\eqref{eq:MPproblem} without incoherence assumptions was done by~\cite{locatello2017unified}. To prove convergence, they exploit the connection between MP and the Frank-Wolfe (FW) algorithm~\cite{Frank:1956vp}, a popular projection-free algorithm for the constrained optimization case. On the other hand, steepest coordinate descent is a special case of MP (when the atom set is the L1 ball). This is particularly important as the CD rates can be deduced from the MP rates. Furthermore, the literature on coordinate descent is currently much richer than the one on MP. Therefore, understanding the connection of the two classes of CD and MP-type algorithms is a main goal of this paper, and results in benefits for both sides of the spectrum.
In particular, the contributions of this paper are:

\begin{itemize}
\item We present an affine invariant convergence analysis for Matching Pursuit algorithms solving~\eqref{eq:MPproblem}. Our approach is tightly related to the analysis of coordinate descent and relies on the properties of the atomic norm in order to generalize from coordinates to atoms.
\item Using our analysis, we present the tightest known linear and sublinear convergence rates for steepest coordinate descent, improving the constants in the rates of~\cite{stich2017approximate,Nutini:2015vd}.
\item We discuss the convergence guarantees of Random Pursuit (RP) methods which we analyze through the lens of MP. In particular, we present a unified analysis of both MP and RP which allows us to carefully trade off the use of (approximate) steepest directions over random ones.
\item We prove the first known accelerated rate for MP, as well as for steepest coordinate descent. As a consequence, we also demonstrate an improvement upon the accelerated random CD rate by performing a steepest coordinate update instead.
\end{itemize}
\paragraph{Related Work:}\label{par:related}
Matching Pursuit was introduced in the context of sparse recovery~\cite{Mallat:1993gu}, and later, fully corrective variants similar to the one used in Frank-Wolfe~\cite{Holloway:1974ii,LacosteJulien:2015wj,thomas2018subsampling} were introduced under the name of orthogonal matching pursuit \cite{chen1989orthogonal, Tropp:2004gc}. The classical literature for MP-type methods is typically focused on recovery guarantees for sparse signals and the convergence depends on very strong assumptions (from an optimization perspective), such as incoherence or restricted isometry properties of the atom set \cite{Tropp:2004gc,davenport2010analysis}. Convergence rates with incoherent atom sets are predented in \cite{Gribonval:2006ch,Temlyakov:2013wf, Temlyakov:2014eb, nguyen2014greedy}. Also boosting can be seen as a generalized coordinate descent method over a hypothesis class~\cite{ratsch2001convergence,meir2003introduction}.

The idea of following a prescribed set of directions also appears in the field of derivative free methods. For instance, the early method of Pattern-Search~\cite{Hooke:1961,Dennis:1991,torczon1997convergence} explores the search space by probing function values along predescribed directions (``patterns'' or atoms). This method is in some sense orthogonal to the approach here: by probing the function values along all atoms, one aims to find a direction along which the function decreases (and the absolute value of the scalar product with the gradient is potentially small). MP does not access the function value, but computes the gradient and then picks the atom with the smallest scalar product with the gradient, and then moves to a point where the function value decreases.

The description of random pursuit appears already in the work of~\citet{Rastrigin:1964} and was first analyzed by~\citet{Karmanov:1974b,Karmanov:1974a,Zielinski:1983}. More recently random pursuit was revisited in~\cite{stich2013optimization,Stich14}.

Acceleration of first-order methods was first developed in~\cite{Nesterov:1983wy}.
An accelerated CD method was described in~\cite{nesterov2012efficiency}.
The method was extended in~\cite{Lee13} for non-uniform sampling, and later in~\cite{Stich14} for optimization along arbitrary random directions. Recently, optimal rates have been obtained for accelerated CD~\cite{Nesterov:2017,zhuc16}. A close setup is the accelerated algorithm presented in~\cite{el2017general}, which minimizes a composite problem of a convex function on $\bbR^n$ with a non-smooth regularizer which acts as prior for the structure of the space. Contrary to our setting, the approach is restricted to the atoms being linearly independent.
Simultaneously at ICML 2018, \citet{lu2018greedy} propose an accelerated rate for the semi-greedy coordinate descent which is a special case of our accelerated MP algorithm.
\vspace{-3mm} 
\paragraph{Notation:}
Given a non-empty subset $\cA$ of some Hilbert space, let $\conv(\cA)$ be the convex hull of~$\cA$, and let $\lin(\cA)$ denote its linear span. Given a closed 
set $\cA$, we call its diameter $\diam(\cA)=\max_{\bz_1,\bz_2\in\cA}\|\bz_1-\bz_2\|$ and its radius $\radius(\cA) = \max_{\bz\in\cA}\|\bz\|$. 
 $\|\bx\|_\cA := \inf \{ c > 0 \colon \bx \in c \cdot \conv (\cA) \}$ is the atomic norm of $\bx$ over a set $\cA$ (also known as the gauge function of $\conv (\cA)$). We call a subset $\cA$ of a Hilbert space symmetric if it is closed under negation. 
\section{Revisiting Matching Pursuit} \label{sec:match_pursuit_revist}
Let $\cH$ be a Hilbert space with associated inner product $\langle \bx, \by\rangle, \,\forall \, \bx,\by \in \cH$. The inner product induces the norm $\| \bx \|^2 := \langle \bx, \bx \rangle,$ $\forall \, \bx \in \cH$. Let $\cA \subset \cH$ be a compact and symmetric set (the ``set of atoms'' or dictionary) and let $f \colon \cH \to \bbR$ be convex and $L$-smooth ($L$-Lipschitz gradient in the finite dimensional case). If $\cH$ is an infinite-dimensional Hilbert space, then $f$ is assumed to be Fr{\'e}chet differentiable. 
   \begin{algorithm}
\caption{Generalized Matching Pursuit}
  \label{algo:MP}
\begin{algorithmic}[1]
  \STATE \textbf{init} $\bx_{0} \in \lin(\cA)$
  \STATE \textbf{for} {$t=0\dots T$}
  \STATE \quad Find $\bz_t := (\text{Approx-}) \lmo_\cA(\nabla f(\bx_{t}))$
  \STATE \quad $\bx_{t+1} := \bx_t - \frac{\langle \nabla f(\bx_t),\bz_t\rangle}{L\|\bz_t\|^2}\bz_t$
  \STATE \textbf{end for}
\end{algorithmic}
\end{algorithm}
\vspace{-0.2mm}

In each iteration, MP queries a linear minimization oracle (\lmo) to find the steepest descent direction among the set~$\cA$:
\begin{equation}\label{eq:FWLMO}
\lmo_\cA(\by) := \argmin_{\bz\in\cA} \,\langle \by, \bz \rangle \,,
\end{equation}
for a given query vector $\by\in\cH$.
This key subroutine is shared with the Frank-Wolfe method~\cite{Frank:1956vp,jaggi13FW} as well as steepest coordinate descent. Indeed, finding the steepest coordinate is equivalent to minimizing Equation~\ref{eq:FWLMO}.  
The MP update step minimizes a quadratic upper bound $
 g_{\bx_{t}}(\bx) = f(\bx_{t}) + \langle\nabla f(\bx_{t}), \bx-\bx_{t}\rangle+\frac{L}{2}\|\bx-\bx_{t}\|^2
$
of~$f$ at~$\bx_t$ on the direction $\bz$ returned by the \lmo, where $L$ is an upper bound on the smoothness constant of~$f$ with respect to the Hilbert norm $\|\cdot\|$.
For $f(\bx) = \frac{1}{2}\|\by -\bx\|^2$, $\by \in \cH$, Algorithm~\ref{algo:MP} recovers the classical MP algorithm \cite{Mallat:1993gu}.
\vspace{-2mm} 
\paragraph{The LMO.}
Greedy and projection-free optimization algorithms such as Frank-Wolfe and Matching Pursuit rely on the property that the result of the $\lmo$ is a descent direction, which is translated to an \emph{alignment assumption} of the search direction returned by the \lmo (i.e., $\bz_t$ in Algorithm~\ref{algo:MP}) and the gradient of the objective at the current iteration (see~\cite{locatello2017greedy},~\citep[third premise]{pena2015polytope} and~\citep[Lemma 12 and proof of Proposition 6.4]{torczon1997convergence}). Specifically, for Algorithm~\ref{algo:MP}, a symmetric atom set $\cA$ ensures that $\langle \nabla f(\bx_t), \bz_t \rangle < 0$, as long as $\bx_t$ is not optimal yet. Indeed, we then have that $\min_{\bz\in\cA}\langle \nabla f(\bx_t), \bz \rangle =\min_{\bz\in\conv(\cA)}\langle \nabla f(\bx_t), \bz \rangle < 0$ where the inequality comes from symmetry as $\bz = \0 \in \conv(\cA)$. Note that an alternative sufficient condition instead of symmetry is that $\cA$ is the atomic ball of a norm (the so called atomic norm~\cite{Chandrasekaran:2012hl}).
\vspace{-2mm} 
\paragraph{Steepest Coordinate Descent.}
In the case when $\cA$ is the L1-ball, the MP algorithm becomes identical to  steepest coordinate descent \cite{nesterov2012efficiency}. Indeed, due to symmetry of $\cA$, one can rewrite the $\lmo$ problem as 
$i_t = \argmax_{i} | \nabla_i f(x)| \,,$ where $\nabla_i$ is the $i$-th component of the gradient, i.e. $\langle \nabla f(x), \be_i\rangle$ with $\be_i$ being one of the natural vectors. Then the update step can be written as:
\begin{align*}
\bx_{t+1} := \bx_{t+1} - \frac{1}{L}\nabla_{i_t} f(\bx_t)\be_i \,.
\end{align*}
Note that by assuming a symmetric atom set and solving the $\lmo$ problem as defined in~\eqref{eq:FWLMO} the steepest atom is aligned with the negative gradient, therefore the positive stepsize $- \frac{\langle \nabla f(\bx_t), \bz_t\rangle}{L}$ decreases the objective.
\paragraph{Approximate linear oracles.}
\label{sec:approxlmo}
Exactly solving the \lmo defined in \eqref{eq:FWLMO} can be costly in practice, both in the MP and the CD setting, as $\cA$ can contain (infinitely) many atoms. On the other hand, approximate versions can be much more efficient. Algorithm~\ref{algo:MP} allows for an \emph{approximate \lmo}. Different notions of such a \lmo were explored for MP and OMP in \cite{Mallat:1993gu} and \cite{Tropp:2004gc}, respectively, for the Frank-Wolfe framework in~\cite{jaggi13FW,LacosteJulien:2013ue} and for coordinate descent~\cite{stich2017approximate}.
For given quality parameter $\delta\in \left( 0,1\right]$ and given direction $\bd\in\cH$, the approximate \lmo for Algorithm~\ref{algo:MP} returns a vector $\tilde{\bz}\in\cA$ such that: \begin{equation}\label{eq:inexactLMOMP}
 \langle \bd,\tilde{\bz}\rangle \leq \delta\langle \bd,\bz\rangle \,,
\end{equation}
relative to $\bz =\lmo_\cA(\bd)$ being an exact solution.
\subsection{Affine Invariant Algorithm}
In this section, we will present our new affine invariant algorithm for the optimization problem~\eqref{eq:MPproblem}. Hence, we first explain in Definition~\ref{def:aff_invar} that what does it mean for an optimization algorithm to be affine invariant:
\begin{definition} \label{def:aff_invar}
An optimization method is called \emph{affine invariant} if it is invariant
under affine transformations of the input problem: If one chooses any
re-parameterization of the domain~$\domain$ by a \emph{surjective} linear or
affine map $\bM:\hat\domain\rightarrow\domain$, then the ``old'' and ``new''
optimization problems $\min_{\bx\in\domain}f(\bx)$ and
$\min_{\hat\bx\in\hat\domain}\hat f(\hat\bx)$ for $\hat f(\hat\bx):=f(\bM\hat\bx)$
look the same to the algorithm.
\end{definition}
In other words, a step of the algorithm in the original optimization problem is the same as a step in the transformed problem. We will further demonstrate in the appendix that the proposed Algorithm~\ref{algo:affineMP} which we discuss later in detail is indeed an affine invariant algorithm.  In order to obtain an affine invariant algorithm, we define an affine invariant notion of smoothness using the atomic norm.  This notion is inspired by the curvature constant employed in FW and MP, see~\cite{jaggi13FW,locatello2017unified}.
We define:\vspace{-1mm}
\begin{equation}
 L_{\cA} := \!\!\!\sup_{\substack{\bx,\by\in\lin(\cA)\\\by = \bx + \gamma\bz \\\|\bz\|_\cA=1, \gamma\in \bbR_{>0}}}\frac{2}{\gamma^2}\big[ f(\by)- f(\bx) -  \langle \nabla f(\bx), \by - \bx \rangle\big] \,.
\end{equation}
This definition combines the complexity of the function~$f$ as well as the set $\cA$ into a single number, and is affine invariant under transformations of our input problem~\eqref{eq:MPproblem}. It yields the same upper bound to the function as the one given by the traditional smoothness definition, that is $L_{\cA}$-smoothness with respect to the atomic norm $\|\cdot\|_\cA$, when~$\bx,\by$ are constrained to the set $\lin(\cA)$:\vspace{-1mm}
\begin{align*}
f(\by) \leq f(\bx) + \langle \nabla f(\bx),\by-\bx \rangle + \frac{L_\cA}{2} \|\by-\bx\|_\cA \,,
\end{align*}
For example, if $\cA$ is the L1-ball we obtain $f(\bx + \gamma\bz) \leq f(\bx) + \gamma  \langle \nabla f(\bx),\bz \rangle + \gamma^2\frac{L_1}{2}$ where $\|\bz\|_1 = 1$. Based on the affine-invariant notion of smoothness defined above, we now present pseudocode of our affine-invariant method in Algorithm~\ref{algo:affineMP}.

\begin{algorithm}[H]
\caption{Affine Invariant Generalized Matching Pursuit}
  \label{algo:affineMP}
\begin{algorithmic}[1]
  \STATE \textbf{init} $\bx_{0} \in \lin(\cA)$
  \STATE \textbf{for} {$t=0\dots T$}
  \STATE \quad Find $\bz_t := (\text{Approx-}) \lmo_\cA(\nabla f(\bx_{t}))$
  \STATE \quad $\bx_{t+1} = \bx_t - \frac{\langle \nabla f(\bx_t),\bz_t\rangle}{L_\cA}\bz_t$
  \STATE \textbf{end for}
\end{algorithmic}
\end{algorithm}
\vspace{-0.2mm}

The above algorithm looks very similar to the generalized MP (Algorithm~\ref{algo:MP}), however, the main difference is that while the original algorithm is not affine invariant over the domain $\domain=\lin(\cA)$ (Def~\ref{def:aff_invar}), the new Algorithm \ref{algo:affineMP} is so, due to using the generalized smoothness constant $L_\cA$.

\paragraph{Note.}
\looseness=-1For the purpose of the analysis, we call $\bx^\star$ the minimizer of problem~\eqref{eq:MPproblem}. If the optimum is not unique, we pick the one with largest atomic norm as it represent the worst case for the analysis. All the proofs are deferred to the appendix.
\subsubsection{New Affine Invariant Sublinear Rate}\label{sec:sub_rate}
In this section, we will provide the theoretical justification of our proposed approach for smooth functions (sublinear rate) and its theoretical comparison with existing previous analysis for special cases. We define the level set radius measured with the atomic norm as:
\begin{equation}
R_\cA^2 := \max_{\substack{\bx\in\lin(\cA)\\ f(\bx)\leq f(\bx_0)}}\|\bx-\bx^\star\|^2_\cA \,.
\end{equation}
When we measure this radius with the $\|\cdot\|_2$ we call it $R_2^2$, and when we measure it with $\|\cdot\|_1$ we call it $R_1^2$.
Note that measuring smoothness using the atomic norm guarantees that for the Lipschitz constant $L_\cA$ the following holds:
\begin{lemma}\label{lem:LwithRadius}
Assume $f$ is $L$-smooth w.r.t. a given norm $\|\cdot\|$, over $\lin(\cA)$ where $\cA$ is symmetric.
Then,
\begin{equation}
 L_\cA \leq L \, \radius_{\norm{\cdot}}(\cA)^2\,.
\end{equation}
\end{lemma}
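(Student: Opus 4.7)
The plan is to unfold the definition of $L_\cA$, apply the standard quadratic upper bound from $L$-smoothness in the norm $\|\cdot\|$, and then bound the Hilbert-space size of the unit-atomic-norm direction by the radius of $\cA$.

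More concretely, fix any feasible triple $(\bx,\by,\bz,\gamma)$ in the sup defining $L_\cA$, i.e.\ $\bx,\by\in\lin(\cA)$, $\by=\bx+\gamma\bz$, $\|\bz\|_\cA=1$ and $\gamma>0$. First I would invoke the assumed $L$-smoothness of $f$ with respect to $\|\cdot\|$ at $\bx$ and $\by$ to get
\begin{equation*}
f(\by)-f(\bx)-\langle\nabla f(\bx),\by-\bx\rangle\;\le\;\tfrac{L}{2}\|\by-\bx\|^2 \;=\;\tfrac{L}{2}\gamma^2\|\bz\|^2.
\end{equation*}
Dividing by $\gamma^2/2$ therefore gives the pointwise bound $\tfrac{2}{\gamma^2}[f(\by)-f(\bx)-\langle\nabla f(\bx),\by-\bx\rangle]\le L\|\bz\|^2$, so taking the supremum over all admissible triples yields $L_\cA\le L\sup_{\|\bz\|_\cA=1}\|\bz\|^2$.

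The remaining (and only real) step is to show $\sup_{\|\bz\|_\cA=1}\|\bz\|\le\radius_{\|\cdot\|}(\cA)$. By definition of the atomic norm as the gauge of $\conv(\cA)$ and since $\cA$ (hence $\conv(\cA)$) is compact, $\|\bz\|_\cA=1$ implies $\bz\in\conv(\cA)$. Writing $\bz=\sum_i \lambda_i \ba_i$ as a finite (or limit of finite) convex combination of atoms $\ba_i\in\cA$ and applying the triangle inequality / convexity of $\|\cdot\|$ gives $\|\bz\|\le\sum_i\lambda_i\|\ba_i\|\le\max_{\ba\in\cA}\|\ba\|=\radius_{\|\cdot\|}(\cA)$. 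Combining with the previous display proves $L_\cA\le L\cdot\radius_{\|\cdot\|}(\cA)^2$.

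There is no real obstacle: the only subtlety worth mentioning is justifying $\{\bz:\|\bz\|_\cA=1\}\subseteq\conv(\cA)$, which uses compactness of $\cA$ (hence closedness of $\conv(\cA)$) so that the infimum in the gauge is attained; this is why the assumption that $\cA$ is compact is important. Symmetry of $\cA$ is not actually needed for this lemma, but is inherited from the global setting of Section~\ref{sec:match_pursuit_revist}.
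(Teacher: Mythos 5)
Your proof is correct and follows essentially the same route as the paper's: apply the $L$-smoothness bound to the Bregman-type quantity in the definition of $L_\cA$, cancel the $\gamma^2$, and reduce to bounding $\sup_{\|\bz\|_\cA=1}\|\bz\|$ by $\radius_{\|\cdot\|}(\cA)$. The only difference is that you spell out the last step (via the gauge/convex-combination argument), which the paper asserts without justification, and you correctly observe that symmetry of $\cA$ plays no role here.
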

For example, in the coordinate descent setting we measure smoothness with the atomic norm being the L1-norm. Lemma~\ref{lem:LwithRadius} implies that  $L_{\cA} \leq L_1 \leq L_2$ where $L_2$ is the smoothness constant measured with the L2-norm. Note that the radius of the L1-ball measured with $\|\cdot\|_1$ is 1.
Therefore, we put ourselves in a more general setting than Algorithm~\ref{algo:MP}, showing convergence of the affine invariant Algorithm~\ref{algo:affineMP}

We are now ready to prove the convergence rate of Algorithm~\ref{algo:affineMP} for smooth functions.
\begin{theorem}\label{thm:sublinear_MP_rate}
Let $\cA \subset \cH$ be a closed and bounded set. We assume that $\|\cdot\|_\cA$ is a norm over $\lin(\cA)$. Let $f$ be convex and $L_\cA$-smooth w.r.t. the norm $\|\cdot\|_\cA$ over $\lin(\cA)$, and let $R_\cA$ be the radius of the level set of $\bx_0$ measured with the atomic norm.
Then, Algorithm~\ref{algo:affineMP} converges for $t \geq 0$ as
\[
f(\bx_{t+1}) - f(\bx^\star) \leq \frac{2L_{\cA}R_\cA^2}{\delta^2(t+2)} \,,
\]
where $\delta \in (0,1]$ is the relative accuracy parameter of the employed approximate \lmo~\eqref{eq:inexactLMOMP}.
\end{theorem}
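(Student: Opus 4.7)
The plan is to follow the classical Frank--Wolfe/matching pursuit template, but with smoothness expressed in the atomic norm and with the approximate-oracle factor $\delta$ carried through. First, for sufficient per-step decrease I plug the update $\bx_{t+1} - \bx_t = -\frac{\langle\nabla f(\bx_t),\bz_t\rangle}{L_\cA}\bz_t$ into the $L_\cA$-smoothness upper bound $f(\by) \leq f(\bx) + \langle\nabla f(\bx),\by-\bx\rangle + \frac{L_\cA}{2}\|\by-\bx\|_\cA^2$ recalled in Section~\ref{sec:match_pursuit_revist}. Because $\bz_t \in \cA \subseteq \conv(\cA)$ forces $\|\bz_t\|_\cA \leq 1$, the linear and quadratic terms combine cleanly and yield
\[
f(\bx_{t+1}) \leq f(\bx_t) - \frac{\langle\nabla f(\bx_t),\bz_t\rangle^2}{2L_\cA}.
\]
This already shows that $f$ is non-increasing along the iterates, so every $\bx_t$ stays in the level set of $\bx_0$ and $\|\bx^\star - \bx_t\|_\cA \leq R_\cA$ for all $t$.

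Next I lower-bound the LMO value in terms of the suboptimality gap $h_t := f(\bx_t) - f(\bx^\star)$. By definition of the atomic norm, I can write $\bx^\star - \bx_t = \|\bx^\star - \bx_t\|_\cA\, \bv_t$ with $\bv_t \in \conv(\cA)$, and convexity then gives $h_t \leq \langle\nabla f(\bx_t), \bx_t - \bx^\star\rangle = -\|\bx^\star - \bx_t\|_\cA \langle\nabla f(\bx_t), \bv_t\rangle$. Since $\bv_t$ is a convex combination of atoms, $\langle\nabla f(\bx_t), \bv_t\rangle \geq \min_{\ba\in\cA}\langle\nabla f(\bx_t),\ba\rangle = \langle\nabla f(\bx_t), \bz\rangle$, where $\bz$ is the exact LMO direction. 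The approximate-oracle guarantee~\eqref{eq:inexactLMOMP} upgrades this to $-\langle\nabla f(\bx_t),\bz\rangle \leq -\delta^{-1}\langle\nabla f(\bx_t),\bz_t\rangle$, and combining with the level-set bound $\|\bx^\star - \bx_t\|_\cA \leq R_\cA$ produces the key inequality $|\langle\nabla f(\bx_t),\bz_t\rangle| \geq \delta h_t / R_\cA$. This step is the crux of the argument: convexity naturally expresses $h_t$ as an inner product against a vector spread over the whole scaled ball $\|\bx^\star - \bx_t\|_\cA \conv(\cA)$, whereas the algorithm probes a single atom; what bridges the gap is the atomic-norm/LMO duality $\min_{\bv\in\conv(\cA)}\langle\bd,\bv\rangle = \min_{\ba\in\cA}\langle\bd,\ba\rangle$.

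Inserting the lower bound into the per-step decrease produces the one-step recursion
\[
h_{t+1} \leq h_t - \frac{h_t^2}{C}, \qquad C := \frac{2L_\cA R_\cA^2}{\delta^2},
\]
which I close by induction on the claim $h_{t+1} \leq C/(t+2)$. The base case $t=0$ is free: the recursion together with $h_1 \geq 0$ already forces $h_0 \leq C$, hence $h_1 \leq \max_{x\in[0,C]}(x - x^2/C) = C/4 \leq C/2$. For the inductive step I use that $x\mapsto x - x^2/C$ is monotone on $[0, C/2]$ and the elementary inequality $(t+1)/(t+2)^2 \leq 1/(t+3)$ to transfer the bound from $t$ to $t+1$. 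The main obstacle is keeping the $\delta$-dependence clean: the approximation factor enters multiplicatively in the lower bound on $|\langle\nabla f(\bx_t),\bz_t\rangle|$, and therefore appears squared in the recursion constant $C$, which is precisely what yields the $1/\delta^2$ prefactor in the stated rate.
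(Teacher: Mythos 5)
Your proof is correct and follows essentially the same route as the paper: the atomic-norm smoothness bound gives the per-step decrease $f(\bx_{t+1}) \leq f(\bx_t) - \langle\nabla f(\bx_t),\bz_t\rangle^2/(2L_\cA)$, the LMO value is lower-bounded by $\delta h_t/R_\cA$, and the standard recursion yields the rate. The only (cosmetic) difference is that you bound the LMO value by writing $\bx^\star-\bx_t$ explicitly as a scaled element of $\conv(\cA)$ and using $\min_{\bv\in\conv(\cA)}\langle\bd,\bv\rangle=\min_{\ba\in\cA}\langle\bd,\ba\rangle$, whereas the paper phrases the same step through the dual atomic norm $\|\cdot\|_{\cA*}$ and a generalized Cauchy--Schwarz inequality; your induction closing the recursion is also spelled out more completely than in the paper.
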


\paragraph{Discussion.}
The proof of Theorem~\ref{thm:sublinear_MP_rate} extends the convergence analysis of steepest coordinate descent. As opposed to the classical proof in~\cite{nesterov2012efficiency}, the atoms are here not orthogonal to each other, do not have the same norm and do not correspond to the coordinates of the ambient space. Indeed, $\lin(\cA)$ could be a subset of the ambient space and the only assumptions on $\cA$ are that is closed, bounded and $\|\cdot\|_\cA$ is a norm over $\lin(\cA)$. We do not make any incoherence assumption. The key element of our proof is the definition of smoothness using the atomic norm. Furthermore, we use the properties of the atomic norm to obtain a proof which shares the spirit of the Nesterov's one without having to rely on strong assumptions on $\cA$.
\vspace{-2mm}
\paragraph{Relation to Previous MP Sublinear Rate.}
The sublinear convergence rate presented in Theorem~\ref{thm:sublinear_MP_rate} is fundamentally different in spirit from the one proved in~\cite{locatello2017unified}. Indeed, their convergence analysis builds on top of the proof technique used for Frank-Wolfe in~\cite{jaggi13FW}. They introduce a dependency from the atomic norm of the iterates as a way to constrain the part of the space in which the optimization is taking place which artificially induce a notion of duality gap. They do so by defining $\rho := \max\left\lbrace \|\bx^\star\|_{\cA}, \|\bx_{0}\|_{\cA}\ldots,\|\bx_T\|_{\cA}\right\rbrace<\infty$.
\cite{locatello2017unified} also used an affine invariant notion of smoothness, thus obtaining an affine invariant rate. On the other hand, their notion of smoothness depends explicitly on $\rho$. While this constant can be further upper bounded with the level set radius, it is not known a priori, which makes the estimation of the smoothness constant problematic as it is needed in the algorithm and the proof technique more involved.
We propose a much more elegant solution, which uses a different affine invariant definition of smoothness which explicitly depend on the atomic norm. Furthermore, we managed to get rid of the dependency on the sequence of the iterates by using only properties of the atomic norm without any additional assumption (finiteness of $\rho$).
\vspace{-2mm}
\paragraph{Relation to Steepest Coordinate Descent.}
From our analysis, we can readily recover existing rates for coordinate descent. Indeed, if $\cA$ is the L1-ball in an $n$ dimensional space, the rate of Theorem~\ref{thm:sublinear_MP_rate} with exact oracle can be written as:
\begin{align*}
f(\bx_{t+1}) - f(\bx^\star) \leq \frac{2L_1R_1^2}{t+2}\leq \frac{2L_2R_1^2}{t+2} \leq \frac{2L_2nR_2^2}{t+2} \,,
\end{align*}
where the first inequality is our rate, the second inequality is the rate of~\cite{stich2017approximate} and the last inequality is the rate given in \cite{nesterov2012efficiency}, both with global Lipschitz constant. Therefore, by measuring smoothness with the atomic norm, we have shown a tighter dependency on the dimensionality of the space. Indeed, the atomic norm gives the tightest norm to measure the product between the smoothness of the function and the level set radius among the known rates. 
Therefore, our rate for steepest coordinate descent is the tightest known\footnote{Note that for coordinate-wise $L$ our definition is equivalent to the classical one
. $L_\cA\leq L_2$ if the norm is defined over more than one dimension (i.e. blocks), otherwise there is equality. For the relationship of $L_1$-smoothness to coordinate-wise smoothness, see also \citep[Theorem 4 in Appendix]{karimireddy2019efficient}.}.

\paragraph{Coordinate Descent and Affine Transformations.}
\looseness=-1
But what does it mean to have an affine invariant rate for coordinate descent? By definition, it means that if one applies an affine transformation to the L1-ball, the coordinate descent algorithm in the natural basis and on the transformed domain~$\hat{Q}$ are equivalent. Note that in the transformed problem, the coordinates do not corresponds to the natural coordinates anymore. Indeed, in the transformed domain the coordinates are $\hat{\be_i} = \bM^{-1} \be_i$ where $\bM^{-1}$ is the inverse of the affine map $\bM: \hat{Q}\rightarrow Q$. If one would instead perform coordinate descent in the transformed space using the natural coordinates, one would obtain not only different atoms but also a different iterate sequence. In other words, while Matching Pursuit is fully affine invariant, the definition of CD is not, as the choice of the coordinates is not part of the definition of the optimization problem. The two algorithms do coincide for one particular choice of basis, the canonical coordinate basis for $\cA$. 
\subsubsection{Sublinear Rate of Random Pursuit}
\looseness=-1There is a significant literature on optimization methods which do not require full gradient information. A notable example is random coordinate descent, where only a random component of the gradient is known.
As long as the direction that is selected by the \lmo is not orthogonal to the gradient we have convergence guarantees due to the inexact oracle definition.
We now abstract from the random coordinate descent setting and analyze a randomized variant of matching pursuit, the \emph{random pursuit} algorithm, in which the atom $\bz$ is randomly sampled from a distribution over $\cA$, rather than picked by a linear minimization oracle. This approach is particularly interesting, as it is deeply connected to the random pursuit algorithm analyzed in~\cite{stich2013optimization}.
For now we assume that we can compute the projection of the gradient onto a single atom $\langle \nabla f, \bz\rangle$ efficiently. In order to present a general recipe for any atom set, we exploit the notion of inexact oracle and define the inexactness of the expectation of the sampled direction for a given sampling distribution:\vspace{-1mm}
\begin{equation}\label{eq:delta-def}
\hat\delta^2 := \min_{\bd\in\lin(\cA)}\frac{\bbE_{\bz \in \cA} \langle \bd,\bz\rangle^2}{\|\bd\|_{\cA*}^2} \,.
\end{equation}
This constant was already used in~\cite{Stich14} to measure the convergence of random pursuit ($\beta^2$ in his notation).
Note that for uniform sampling from the corners of the L1-ball, we have $\hat\delta^2 = \frac{1}{n}$. Indeed, $\bbE_{\bz \in \cA} \langle \bd,\bz\rangle^2 = \frac{1}{n}$ for any $\bd$. 
This definition holds for any sampling scheme as long as $\hat\delta^2\neq 0$. Note that by using this quantity we do not get the tightest possible rate, as at each iteration, we consider how much worse a random update could be compared to the optimal (steepest) update. 

We are now ready to present the sublinear convergence rate of random matching pursuit.
\begin{theorem}\label{thm:rand_purs_sub}
Let $\cA \subset \cH$ be a closed and bounded set. We assume that $\|\cdot\|_\cA$ is a norm. Let $f$ be convex and $L_\cA$-smooth w.r.t. the norm $\|\cdot\|_\cA$ over $\lin(\cA)$ and let $R_\cA$ be the radius of the level set of $\bx_0$ measured with the atomic norm.
Then, Algorithm~\ref{algo:affineMP} converges for $t \geq 0$ as\vspace{-1mm}
\[
\bbE_\bz \big[f(\bx_{t+1}) \big] - f(\bx^\star) \leq \frac{2L_{\cA}R_\cA^2}{\hat\delta^2(t+2)} \,,
\]
when the $\lmo$ is replaced with random sampling of $\bz$ from a distribution over $\cA$.
\end{theorem}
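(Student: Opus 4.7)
My plan is to mirror the deterministic proof of Theorem~\ref{thm:sublinear_MP_rate}, but replace the per-step progress coming from an exact (or $\delta$-approximate) LMO by an expected progress bound driven by the sampling quality $\hat\delta^2$. The three ingredients are (i) a per-step descent inequality from $L_\cA$-smoothness in the atomic norm, (ii) lower-bounding the expectation of the squared linear term by $\hat\delta^2 \|\nabla f(\bx_t)\|_{\cA*}^2$ via~\eqref{eq:delta-def}, and (iii) converting the dual-norm squared gradient into a squared suboptimality gap via convexity and the $R_\cA$ level-set bound; a standard $O(1/t)$ recursion then closes the argument.

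First I would write smoothness in the generalized form implied by the definition of $L_\cA$: for any $\bx \in \lin(\cA)$ and any $\bz \in \lin(\cA)$, $\gamma \in \bbR$,
\begin{equation*}
f(\bx + \gamma \bz) \leq f(\bx) + \gamma \langle \nabla f(\bx), \bz\rangle + \tfrac{L_\cA}{2}\gamma^2 \|\bz\|_\cA^2 .
\end{equation*}
Since $\bz_t \in \cA$ gives $\|\bz_t\|_\cA \leq 1$, plugging in the update rule $\bx_{t+1} = \bx_t - \frac{\langle \nabla f(\bx_t), \bz_t\rangle}{L_\cA}\bz_t$ and discarding the favorable $\|\bz_t\|_\cA^2 \leq 1$ term yields
\begin{equation*}
f(\bx_{t+1}) \leq f(\bx_t) - \tfrac{\langle \nabla f(\bx_t), \bz_t\rangle^2}{2L_\cA}.
\end{equation*}
This holds deterministically for any sampled $\bz_t$, so taking conditional expectation over $\bz_t \sim \cA$ given $\bx_t$ and using the definition of $\hat\delta^2$ applied to $\bd = \nabla f(\bx_t)$ gives
\begin{equation*}
\bbE_{\bz_t}\!\left[f(\bx_{t+1}) \mid \bx_t\right] \leq f(\bx_t) - \tfrac{\hat\delta^2\,\|\nabla f(\bx_t)\|_{\cA*}^2}{2L_\cA}.
\end{equation*}

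Next, by convexity and Hölder's inequality for the dual pair $(\|\cdot\|_\cA, \|\cdot\|_{\cA*})$,
\begin{equation*}
f(\bx_t) - f(\bx^\star) \leq \langle \nabla f(\bx_t), \bx_t - \bx^\star\rangle \leq \|\nabla f(\bx_t)\|_{\cA*}\, \|\bx_t - \bx^\star\|_\cA \leq R_\cA\, \|\nabla f(\bx_t)\|_{\cA*},
\end{equation*}
where the last step uses that expected function values are monotonically non-increasing, so in particular every realized $\bx_t$ lies in the level set of $\bx_0$ and thus satisfies $\|\bx_t - \bx^\star\|_\cA \leq R_\cA$. Setting $h_t := \bbE[f(\bx_t) - f(\bx^\star)]$, combining the two displays, and using Jensen $(\bbE h_t)^2 \leq \bbE[h_t^2]$ produces the recursion
\begin{equation*}
h_{t+1} \leq h_t - \tfrac{\hat\delta^2}{2L_\cA R_\cA^2}\, h_t^2.
\end{equation*}
From this, inverting to $1/h_{t+1} \geq 1/h_t + \hat\delta^2/(2L_\cA R_\cA^2)$ (valid since $h_{t+1} \leq h_t$) and telescoping, or else a direct induction with base case $h_1 \leq 2L_\cA R_\cA^2/(\hat\delta^2 \cdot 2)$, gives the claimed $\frac{2L_\cA R_\cA^2}{\hat\delta^2 (t+2)}$ bound.

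The main subtlety is the expectation handling when chaining the recursion across iterations: one must apply the tower rule to obtain a deterministic recursion on $\bbE[h_t]$, and then use Jensen in the direction $\bbE[h_t^2] \geq (\bbE h_t)^2$ (which goes the correct way because the nonlinearity $-ch_t^2$ appears with a minus sign). A secondary technicality is that realizations of $\bx_t$ must remain in the initial level set almost surely so that $R_\cA$ is a valid bound; this follows because the per-step inequality $f(\bx_{t+1}) \leq f(\bx_t)$ is pathwise, not merely in expectation, thanks to $\|\bz_t\|_\cA \leq 1$. Everything else is routine once the atomic-norm smoothness machinery from Theorem~\ref{thm:sublinear_MP_rate} is available.
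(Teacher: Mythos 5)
Your proof is correct and follows essentially the same route as the paper's: the $L_\cA$-smoothness descent step, lower-bounding the expected squared inner product by $\hat\delta^2\|\nabla f(\bx_t)\|_{\cA*}^2$ via the definition of $\hat\delta^2$, and then the convexity/Cauchy--Schwarz/level-set argument and $O(1/t)$ recursion exactly as in Theorem~\ref{thm:sublinear_MP_rate}. Your explicit handling of the tower rule, the direction of Jensen's inequality, and the pathwise monotonicity keeping $\bx_t$ in the level set is in fact more careful than the paper, which simply defers to the deterministic proof.
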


\paragraph{Gradient-Free Variant.}
If is possible to obtain a fully gradient-free optimization scheme. In addition to having replaced the LMO in Algorithm~\ref{algo:MP} by the random sampling as above, as can additionally also replace the line search step on the quadratic upper bound given by smoothness, with instead an approximate line search on $f$. As long as the update scheme guarantees as much decrease as the above algorithm, the convergence rate of Theorem~\ref{thm:rand_purs_sub} holds. 

\paragraph{Discussion.}
This approach is very general, as it allows to guarantee convergence for \textit{any} sampling scheme and \textit{any} set $\cA$ provided that $\hat\delta^2 \neq 0$. In the coordinate descent case we have that for the worst possible gradient for random has $\hat\delta^2 = \frac{1}{n}$. Therefore, the speed-up of steepest can be up to a factor equal to the number of dimensions in the best case. Similarly, if $\bz$ is sampled from a spherical distribution, $\hat\delta^2 = \frac{1}{n}$~\cite{stich2013optimization}. More examples of computation of $\hat\delta^2$ can be found in \citep[Section 4.2]{Stich14}. Last but not least, note that $\hat\delta^2$ is affine invariant as long as the sampling distribution over the atoms is preserved. 

\subsubsection{Strong Convexity and Affine Invariant Linear Rates}
Similar to the affine invariant notion of smoothness, we here define the affine invariant notion of strong convexity.
\begin{align*}
\mu_\cA := \inf_{\substack{\bx,\by\in\lin(\cA)\\\bx\neq\by}} \frac{2}{\|\by-\bx\|_{\cA}^2} D(\by,\bx)\,.
\end{align*}
where $D(\by,\bx):= f(\by)-f(\bx)-\langle \nabla f(\bx),\by -\bx\rangle$
We can now show the linear convergence rate of both the matching pursuit algorithm and its random pursuit variant.
\begin{theorem}\label{thm:linear_rate}
Let $\cA \subset \cH$ be a closed and bounded set. We assume that $\|\cdot\|_\cA$ is a norm. Let $f$ be $\mu_\cA$-strongly convex and $L_\cA$-smooth w.r.t. the norm $\|\cdot\|_\cA$, both over $\lin(\cA)$.
Then, Algorithm~\ref{algo:affineMP} converges for $t \geq 0$ as
\[
\varepsilon_{t+1} \leq \big(1 - \delta^2\frac{\mu_\cA}{L_{\cA}}\big)\varepsilon_t \,.
\]
where $\varepsilon_{t} := f(\bx_{t}) - f(\bx^\star)$.
If the LMO direction is sampled randomly from $\cA$,  Algorithm~\ref{algo:affineMP} converges for $t \geq 0$ as
\[
\bbE_\bz \left[\varepsilon_{t+1}|\bx_t\right] \leq \big(1 - \hat\delta^2\frac{\mu_\cA}{L_{\cA}} \big)\varepsilon_t \,.
\]
\end{theorem}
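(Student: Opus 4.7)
The plan is to combine three standard ingredients: (i) an $L_\cA$-smoothness-based descent lemma applied to the closed-form step size chosen by Algorithm~\ref{algo:affineMP}, (ii) a lower bound on the realized progress in terms of the dual atomic norm of the gradient, and (iii) a strong-convexity inequality that relates the dual atomic norm of the gradient to the suboptimality $\varepsilon_t$. These three pieces chain together to give the contraction factor $1-\delta^2\mu_\cA/L_\cA$ (or $1-\hat\delta^2\mu_\cA/L_\cA$ in the randomized case).

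\textbf{Step 1 (descent from smoothness).} Since $\bz_t \in \cA$ one has $\|\bz_t\|_\cA \leq 1$, so plugging $\bx_{t+1}=\bx_t-\tfrac{\langle\nabla f(\bx_t),\bz_t\rangle}{L_\cA}\bz_t$ into the smoothness inequality with respect to $\|\cdot\|_\cA$ and simplifying yields
\[
f(\bx_{t+1}) \leq f(\bx_t) - \tfrac{1}{2 L_\cA}\langle \nabla f(\bx_t),\bz_t\rangle^{2}.
\]
This is the key per-step inequality; the step size is precisely the minimizer of the quadratic upper model, which is why the squared inner product appears.

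\textbf{Step 2 (lower bound on $\langle \nabla f(\bx_t),\bz_t\rangle^{2}$).} Let $\bz_t^\star := \lmo_\cA(\nabla f(\bx_t))$. By symmetry of $\cA$ and the definition of the dual atomic norm, $-\langle\nabla f(\bx_t),\bz_t^\star\rangle = \|\nabla f(\bx_t)\|_{\cA*}$ (understood on $\lin(\cA)$). The approximate-oracle condition~\eqref{eq:inexactLMOMP} then gives $\langle\nabla f(\bx_t),\bz_t\rangle \leq -\delta\|\nabla f(\bx_t)\|_{\cA*}$, and squaring yields $\langle\nabla f(\bx_t),\bz_t\rangle^{2} \geq \delta^{2}\|\nabla f(\bx_t)\|_{\cA*}^{2}$. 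For the random-pursuit variant, we instead take the expectation of the per-step inequality and apply the definition of $\hat\delta^{2}$ in \eqref{eq:delta-def} directly to conclude $\mathbb{E}_\bz\langle\nabla f(\bx_t),\bz\rangle^{2} \geq \hat\delta^{2}\|\nabla f(\bx_t)\|_{\cA*}^{2}$.

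\textbf{Step 3 (strong convexity $\Rightarrow$ gradient dominance).} Using $\mu_\cA$-strong convexity over $\lin(\cA)$, for any $\by \in \lin(\cA)$,
\[
f(\by) \geq f(\bx_t) + \langle \nabla f(\bx_t),\by-\bx_t\rangle + \tfrac{\mu_\cA}{2}\|\by-\bx_t\|_\cA^{2}.
\]
Minimizing the right-hand side over $\by \in \lin(\cA)$ (the minimizer is $\bx_t - \mu_\cA^{-1}\bg$ where $\bg$ is the element of $\lin(\cA)$ of unit dual-atomic norm realizing $\|\nabla f(\bx_t)\|_{\cA*}$, scaled appropriately), and using $f(\bx^\star) \leq \min_{\by\in\lin(\cA)} \text{RHS}$ rearranged, gives $\|\nabla f(\bx_t)\|_{\cA*}^{2} \geq 2\mu_\cA \varepsilon_t$. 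Plugging this into Steps 1--2 and subtracting $f(\bx^\star)$ from both sides yields $\varepsilon_{t+1} \leq (1-\delta^{2}\mu_\cA/L_\cA)\varepsilon_t$ in the deterministic case, and the analogous bound with $\hat\delta^{2}$ in the randomized case.

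\textbf{Main obstacle.} The routine calculations are easy; the subtle point is Step 3, the Polyak--Łojasiewicz-type inequality with respect to the \emph{atomic} norm over the possibly proper subspace $\lin(\cA)$. One must verify that both sides only involve quantities intrinsic to $\lin(\cA)$ (so that the ``gradient'' effectively acts as its $\lin(\cA)$-projection, consistent with the fact that $\langle \nabla f(\bx_t),\bz\rangle$ for $\bz\in\cA$ only sees that projection), and that $\|\cdot\|_{\cA*}$ is well-defined and finite on this subspace, which is exactly the hypothesis that $\|\cdot\|_\cA$ is a norm on $\lin(\cA)$.
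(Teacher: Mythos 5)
Your proposal is correct and follows essentially the same route as the paper's proof: the smoothness descent inequality with the optimal step size, the lower bound $\langle\nabla f(\bx_t),\bz_t\rangle^2 \geq \delta^2\|\nabla_\parallel f(\bx_t)\|_{\cA*}^2$ (resp.\ the $\hat\delta^2$ bound in expectation), and the strong-convexity (PL-type) inequality $\|\nabla_\parallel f(\bx_t)\|_{\cA*}^2 \geq 2\mu_\cA\varepsilon_t$, chained together exactly as in the appendix. Your ``main obstacle'' remark about restricting everything to $\lin(\cA)$ is precisely what the paper handles by working with the parallel component $\nabla_\parallel f$ of the gradient.
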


\paragraph{Relation to Previous MP Linear Rate.}
Again, the proof of Theorem~\ref{thm:linear_rate} extends the convergence analysis of steepest coordinate descent using solely the affine invariant definition of strong convexity and the properties of the atomic norm.
Note that again we define the strong convexity constant without relying on $\rho = \max\left\lbrace \|\bx^\star\|_{\cA}, \|\bx_{0}\|_{\cA}\ldots,\|\bx_T\|_{\cA}\right\rbrace<\infty$ as in~\cite{locatello2017unified}. We now show that our choice of the strong convexity parameter is the tightest w.r.t. any choice of the norm and that we can precisely recover the non affine invariant rate of~\cite{locatello2017unified}.
Let us recall their notion of \textit{minimal directional width}, which is the crucial constant to measure the geometry of the atom set for a fixed norm:\vspace{-1mm}
\begin{align*}
\mdw := \min_{\substack{\bd\in\lin(\cA)\\\bd\neq 0}} \max_{z\in\cA} \Big\langle \frac{\bd}{\|\bd\|},\bz \Big\rangle \,.
\end{align*}
Note that for CD we have that $\mdw = \frac{1}{\sqrt{n}}$.
Now, we relate the affine invariant notion of strong convexity with the minimal directional width and the strong convexity w.r.t. any chosen norm. This is important, as we want to make sure to perfectly recover the convergence rate given in~\cite{locatello2017unified}.
\begin{lemma}\label{thm:mu_mdw}
Assume $f$ is $\mu$-strongly convex w.r.t. a given norm $\|\cdot\|$ over $\lin(\cA)$ and $\cA$ is symmetric. Then:\vspace{-1mm}
\begin{align*}
\mu_\cA \geq \mdw^2\mu \,.
\end{align*}
\end{lemma}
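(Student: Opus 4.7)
The plan is to relate the atomic norm to the Hilbert norm through the minimal directional width, and then substitute into the strong convexity inequality. By the assumed $\mu$-strong convexity of $f$ w.r.t.\ $\|\cdot\|$, for every $\bx,\by\in\lin(\cA)$ we have $D(\by,\bx)\geq \tfrac{\mu}{2}\|\by-\bx\|^2$. Comparing this with the infimum definition of $\mu_\cA$, the whole lemma reduces to the single norm-comparison inequality
\begin{equation*}
 \mdw \cdot \|\bv\|_\cA \;\leq\; \|\bv\| \qquad \forall\,\bv\in\lin(\cA),
\end{equation*}
which upgrades $\tfrac{\mu}{2}\|\by-\bx\|^2$ to $\tfrac{\mu\,\mdw^2}{2}\|\by-\bx\|_\cA^2$ and yields $\mu_\cA\geq \mdw^2\,\mu$ after taking the infimum.

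\textbf{Key step via duality.} To establish the norm comparison, I would argue by duality on the subspace $\lin(\cA)$. Since $\cA$ is symmetric and compact, $\conv(\cA)$ is a symmetric convex body in $\lin(\cA)$ whose gauge function is exactly $\|\cdot\|_\cA$; its dual norm with respect to the Hilbert pairing is the support function $\|\bd\|_{\cA*}=\sup_{\bz\in\conv(\cA)}\langle \bd,\bz\rangle=\max_{\bz\in\cA}\langle\bd,\bz\rangle$ (the max over extreme points, with symmetry collapsing the usual absolute value). By the bipolar/double-dual identity, $\|\bv\|_\cA=\sup_{\bd\in\lin(\cA),\,\|\bd\|_{\cA*}\leq 1}\langle\bd,\bv\rangle$. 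The definition of $\mdw$ is precisely the statement $\|\bd\|_{\cA*}\geq \mdw\,\|\bd\|$ for every nonzero $\bd\in\lin(\cA)$, so the constraint $\|\bd\|_{\cA*}\leq 1$ forces $\|\bd\|\leq 1/\mdw$. Cauchy--Schwarz then gives $\langle\bd,\bv\rangle\leq \|\bd\|\,\|\bv\|\leq \|\bv\|/\mdw$, and taking the supremum over admissible $\bd$ yields $\|\bv\|_\cA\leq \|\bv\|/\mdw$.

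\textbf{Conclusion and main obstacle.} Squaring the norm comparison and chaining with strong convexity gives $D(\by,\bx)\geq \tfrac{\mu\,\mdw^2}{2}\|\by-\bx\|_\cA^2$ for all $\bx\neq\by$ in $\lin(\cA)$, whence $\mu_\cA\geq \mdw^2\,\mu$ by the defining infimum. The main subtlety I anticipate is handling the duality carefully: $\|\cdot\|_\cA$ is only guaranteed to be a norm on the subspace $\lin(\cA)$, so the bipolar argument must be carried out in that subspace with the restricted inner product, and one should verify that the symmetry of $\cA$ is what allows us to identify $\max_{\bz\in\cA}\langle\bd,\bz\rangle$ (rather than $\max_{\bz\in\cA}|\langle\bd,\bz\rangle|$) with the dual atomic norm, so that the $\mdw$ lower bound plugs in cleanly. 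Aside from this bookkeeping, the argument is essentially a one-line consequence of the definition of the minimal directional width.
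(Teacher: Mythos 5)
Your proposal is correct and uses essentially the same ingredients as the paper's proof: the duality between $\|\cdot\|_\cA$ and $\|\bd\|_{\cA*}=\max_{\bz\in\cA}\langle\bd,\bz\rangle$, the bound $\|\bd\|_{\cA*}\geq\mdw\|\bd\|$ coming straight from the definition of the minimal directional width, and Cauchy--Schwarz in the Hilbert norm. The only difference is organizational --- you isolate the norm comparison $\mdw\,\|\bv\|_\cA\leq\|\bv\|$ as a standalone step before invoking strong convexity, whereas the paper threads the same duality argument through a single chain of inequalities starting from the definition of $\mu_\cA$.
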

\noindent We then recover their non-affine-invariant rate as:
\begin{align*}
\varepsilon_{t+1}\leq \Big(1 - \delta^2\frac{\mu\mdw^2}{L\radius_{\|\cdot\|}(\cA)^2} \Big) \varepsilon_t \,.
\end{align*}

\paragraph{Relation to Coordinate Descent.}
When we fix $\cA$ as the L1-ball and use an exact oracle our rate becomes:
\begin{align*}
\varepsilon_{t+1}\leq\left(1 - \frac{\mu_1}{L_1}\right)\varepsilon_t \leq\left(1 - \frac{\mu_1}{L}\right)\varepsilon_t \leq \left(1 - \frac{\mu}{nL}\right)\varepsilon_t \,,
\end{align*}
where the first is our rate, the second is the rate of steepest CD~\cite{Nutini:2015vd} and the last is the one for randomized CD~\cite{nesterov2012efficiency} ($n$ is the dimension of the ambient space).
Therefore, our linear rate for coordinate descent is the tightest known.

\vspace{-2mm}
\section{Accelerating Generalized Matching Pursuit} \label{sec:acc_match_pursuit}
\looseness=-1
As we established in the previous sections, matching pursuit can be considered a generalized greedy coordinate descent where the allowed directions do not need to form an orthogonal basis. This insight allows us to generalize the analysis of accelerated coordinate descent methods and to accelerate matching pursuit \cite{Lee13,Nesterov:2017}. However it is not clear at the outset how to even accelerate greedy coordinate descent, let alone the matching pursuit method. Recently \citet{song2017accelerated} proposed an accelerated greedy coordinate descent method by using the linear coupling framework of \cite{AllenZhu:2014uv}. However the updates they perform at each iteration are not guaranteed to be sparse which is critical for our application. We instead extend the acceleration technique in \cite{stich2013optimization} which in turn is based on \cite{Lee13}. They allow the updates to the two sequences of iterates $\bx$ and $\bv$ to be chosen from any distribution. If this distribution is chosen to be over coordinate directions, we get the familiar accelerated coordinate descent, and if we instead chose the distribution to be over the set of atoms, we would get an accelerated random pursuit algorithm. To obtain an accelerated \emph{matching} pursuit algorithm, we need to additionally \emph{decouple} the updates for $\bx$ and $\bv$ and allow them to be chosen from different distributions. We will update $\bx$ using the greedy coordinate update (or the matching pursuit update), and use a random coordinate (or atom) direction to update~$\bv$.

The possibility of decoupling the updates was noted in \citep[Corollary 6.4]{Stich14} though its implications for accelerating greedy coordinate descent or matching pursuit were not explored.
From here on out, we shall assume that the linear space spanned by the atoms $\cA$ is finite dimensional. This was not necessary for the non-accelerated matching pursuit and it remains open if it is necessary for accelerated MP. When sampling, we consider only a non-symmetric version of the set $\cA$ with all the atoms in the same half space. Line search ensures that sampling either $\bz$ or $-\bz$ yields the same update. For simplicity, we focus on an exact \lmo.

\subsection{From Coordinates to Atoms}
For the acceleration of MP we make some stronger assumption w.r.t. the rates in the previous section. In particular, we will not obtain an affine invariant rate which remains an open problem. The key challenges for an affine invariant accelerated rate are strong convexity of the model, which can be solved using arguments similar to~\cite{d2013optimal} and the fact that our proof relies on defining a new norm which deform the space in order to obtain favorable sampling properties as we will explain in this section.
The main difference between working with atoms and working with coordinates is that projection along coordinate basis vectors is 'unbiased'. Let $\be_i$ represent the $i$th coordinate basis vector. Then for some vector $\bd$, if we project along a random basis vector $\be_i$,\vspace{-1mm}
$$
  \bbE_{i \in [n]} [\inner{\be_i}{\bd}\be_i ]= \frac{1}{n}\bd\,.
$$
However if instead of coordinate basis, we choose from a set of atoms $\cA$, then this is no longer true. We can correct for this by morphing the geometry of the space. Suppose we sample the atoms from a distribution $\cZ$ defined over $\cA$. Let us define
$$
  \tilde{\bP} := \bbE_{\bz \sim \cZ}[\bz \bz^\top]\,.
$$

We assume that the distribution $\cZ$ is such that $\lin(\cA) \subseteq \text{range}(\tilde{\bP})$. This intuitively corresponds to assuming that there is a non-zero probability that the sampled $\bz \sim \cZ$ is along the direction of every atom $\bz_t \in \cA$ i.e.
$$
  \bbP_{\bz \sim \cZ}[\inner{\bz}{\bz_t} >0] > 0, \ \forall \bz_t \in \cA\,.
$$
 Further let $\bP = \tilde{\bP}^{\dagger}$ be the pseudo-inverse of $\tilde{\bP}$. Note that both $\bP$ and $\tilde{\bP}$ are positive semi-definite matrices. We can equip our space with a new inner product
$\inner{\cdot}{\bP \cdot}$ and the resulting norm $\norm{\cdot}_\bP$. With this new dot product,
$$
  \bbE_{\bz \sim \cZ} [\inner{\bz}{\bP \bd}\bz] = \bbE_{\bz \sim \cZ} [ \bz \bz^\top]\bP \bd = \bP^{\dagger} \bP \bd = \bd\,.
$$
The last equality follows from our assumption that $\lin(\cA) \subseteq \text{range}(\tilde{\bP})$.

\begin{algorithm}[h]
\begin{algorithmic}[1]
  \STATE \textbf{init} $\bx_0 = \bv_0 = \by_0$, $\beta_0 = 0$, and $\nu'$
  \STATE \textbf{for} {$t=0, 1 \dots T$}
       \STATE \qquad  Solve ${\alpha_{t+1}^2}{L \nu'} = \beta_t + \alpha_{t+1}$   
    \STATE  \qquad $\beta_{t+1} := \beta_t + \alpha_{t+1} $
    \STATE \qquad $\tau_t := \frac{\alpha_{t+1}}{\beta_{t+1}}$
   \STATE \qquad Compute $\by_{t} := (1- \tau_t)\bx_{t} + \tau_t \bv_t$
   \STATE \qquad Sample $\bz_t \sim \cZ$
\STATE \qquad $\bx_{t+1} := \by_t - \frac{\langle \nabla f(\by_t),\bz_t\rangle}{L\|\bz_t\|^2_2}\bz_t$
\STATE \qquad $\bv_{t+1} := \bv_t - \alpha_{t+1}{\langle \nabla f(\by_t),{\bz}_t\rangle}{\bz}_t$
  \STATE \textbf{end for}
\end{algorithmic}
 \caption{Accelerated Random Pursuit}
 \label{algo:RAMP}
\end{algorithm}

\begin{algorithm}[h]
\begin{algorithmic}[1]
  \STATE \textbf{init} $\bx_0 = \bv_0 = \by_0$, $\beta_0 = 0$, and $\nu$
  \STATE \textbf{for} {$t=0, 1 \dots T$}
     \STATE \qquad  Solve ${\alpha_{t+1}^2}{L\nu} = \beta_t + \alpha_{t+1}$
    \STATE  \qquad $\beta_{t+1} := \beta_t + \alpha_{t+1} $
    \STATE \qquad $\tau_t := \frac{\alpha_{t+1}}{\beta_{t+1}}$
   \STATE \qquad Compute $\by_{t} := (1- \tau_t)\bx_{t} + \tau_t \bv_t$
   \STATE \qquad Find $\bz_t := \lmo_\cA(\nabla f(\by_{t}))$
\STATE \qquad $\bx_{t+1} := \by_t - \frac{\langle \nabla f(\by_t),\bz_t\rangle}{L\|\bz_t\|_2^2}\bz_t$
\STATE \qquad  Sample $\tilde{\bz}_t  \sim \cZ$
\STATE \qquad $\bv_{t+1} := \bv_t - \alpha_{t+1}{\langle \nabla f(\by_t),\tilde{\bz}_t\rangle}\tilde{\bz}_t$
  \STATE \textbf{end for}
\end{algorithmic}
 \caption{Accelerated Matching Pursuit}
 \label{algo:ACDM}
\end{algorithm}

\subsection{Analysis } \label{subsec:pursuit_acc_theory}

 Modeling explicitly the dependency on the structure of the set is crucial to accelerate MP. Indeed, acceleration works by defining two different quadratic subproblems, one upper bound given by smoothness, and one lower bound given by a model of the function. The constraints on the set of possible descent direction implicitly used in MP influence both these subproblems. While the smoothness quadratic upper bound contains information about $\cA$ in its definition ($\by = \bx + \gamma \bz$ and $\|\bz\|_\cA = 1$), the model of the function needs explicit modeling of $\cA$. This is particularly crucial when sampling a direction in the model update, which can be thought as a sort of exploration part of the algorithm.
In both the algorithms, the update of the parameter $\bv$ corresponds to optimizing the modeling function $\psi$ which can be given as :
\begin{align}
&~~~~~~~~~~~\psi_{t+1}(\bx)  =
 \psi_t(\bx) +   \notag\\   &\alpha_{t+1} \Big( f(\by_t) + \langle \tilde{\bz}_t^\top \nabla f(\by_t) , \tilde{\bz}_t^\top \bP(\bx - \by_t)  \rangle \Big) \,, \label{eq:model_acc_random_paper}
\end{align}
where
$\psi_0(\bx) = \frac{1}{2}\| \bx - \bx_0 \|_{\bP}^2$.
\begin{lemma}\label{lem:v-min-psi}
The update of $\bv$ in Algorithm~\ref{algo:RAMP} and~\ref{algo:ACDM} minimizes the model\vspace{-1mm}
  $$
    \bv_{t} \in \argmin_{\bx}\psi_t(\bx) \,.
  $$
\end{lemma}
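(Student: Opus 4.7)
The plan is to prove the lemma by induction on $t$. The key observation is that $\psi_t$ remains a convex quadratic function of $\bx$ whose Hessian is precisely $\bP$, since $\psi_0(\bx) = \frac{1}{2}\|\bx - \bx_0\|_\bP^2$ and each recursive increment in \eqref{eq:model_acc_random_paper} only adds terms that are affine in $\bx$. Consequently $\psi_t$ admits the closed form $\psi_t(\bx) = \frac{1}{2}(\bx - \bv_t)^\top \bP (\bx - \bv_t) + c_t$ for some constant $c_t$, provided $\bv_t$ is a stationary point; showing $\bv_t \in \argmin_{\bx} \psi_t(\bx)$ thus reduces to showing $\nabla \psi_t(\bv_t) = 0$.

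The base case is immediate: $\bv_0 = \bx_0$ and $\nabla \psi_0(\bx_0) = \bP(\bx_0 - \bx_0) = 0$. For the inductive step, assume $\nabla \psi_t(\bv_t) = 0$. Differentiating \eqref{eq:model_acc_random_paper} in $\bx$ gives
\begin{equation*}
\nabla \psi_{t+1}(\bx) = \nabla \psi_t(\bx) + \alpha_{t+1}\bigl(\tilde{\bz}_t^\top \nabla f(\by_t)\bigr)\, \bP \tilde{\bz}_t.
\end{equation*}
Using the quadratic form of $\psi_t$, we have $\nabla \psi_t(\bx) = \bP(\bx - \bv_t)$, so evaluating at $\bx = \bv_{t+1}$ yields
\begin{equation*}
\nabla \psi_{t+1}(\bv_{t+1}) = \bP\bigl(\bv_{t+1} - \bv_t + \alpha_{t+1}(\tilde{\bz}_t^\top \nabla f(\by_t))\tilde{\bz}_t\bigr).
\end{equation*}
Substituting the algorithm's update rule $\bv_{t+1} = \bv_t - \alpha_{t+1}\langle \nabla f(\by_t), \tilde{\bz}_t\rangle \tilde{\bz}_t$ makes the argument of $\bP$ vanish identically, so $\nabla \psi_{t+1}(\bv_{t+1}) = 0$. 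Since the Hessian $\bP$ is positive semidefinite, $\psi_{t+1}$ is convex, and any stationary point is a global minimizer, which closes the induction. The same calculation applies verbatim to Algorithm~\ref{algo:RAMP} with $\bz_t$ in place of $\tilde{\bz}_t$.

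The main subtlety to flag is that $\bP = \tilde{\bP}^\dagger$ is only positive semidefinite, so the minimizer set is not a singleton and different algorithmic trajectories could in principle land on different elements of $\argmin \psi_t$. This is why the lemma uses $\in$ rather than $=$; the argument above shows that the specific $\bv_t$ produced by the algorithm lies in the argmin, which is all that is needed. The assumption $\lin(\cA) \subseteq \mathrm{range}(\tilde{\bP})$ made earlier in the paper ensures that $\bP \tilde{\bz}_t$ is a meaningful quantity (in particular, $\bP \bP^\dagger$ acts as the identity on $\lin(\cA)$), which keeps the gradient computation clean.
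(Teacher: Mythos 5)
Your proof is correct and follows essentially the same route as the paper's: induction on $t$, using the fact that $\psi_t$ is a quadratic with Hessian $\bP$ that can be recentered at $\bv_t$, so that the increment's affine term shifts the minimizer by exactly the algorithm's update. The only cosmetic difference is that you verify the first-order condition $\nabla\psi_{t+1}(\bv_{t+1})=0$ whereas the paper completes the square and reads off the $\argmin$ directly; your explicit remark that $\bP$ is merely positive semidefinite (hence the ``$\in$'' in the statement) is a point the paper glosses over.
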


We will be first discussing the theory for the \textit{greedy} accelerated method in detail. As evident from the algorithm~\ref{algo:ACDM}, another important constant which is required for both the analysis and to actually run the algorithm is $\nu$ for which:\vspace{-2mm}
   $$
  \nu  \leq \max_{\bd\in\lin(\cA)}
  \frac{\bbE\encase{ (\tilde{\bz}_t^\top \bd)^2 \norm{\tilde{\bz}_t}_\bP^2}\|{\bz(\bd)}\|_2^2}{(\bz(\bd)^\top \bd)^2} \,,
  $$
  where $\bz(\bd)$ is defined to be\vspace{-2mm}
  $$
  	\bz(\bd) = \lmo_\cA(- \bd)\,.
  	 $$

The quantity $\nu$ relates the geometry of the atom set with the sampling procedure in a similar way as $\hat\delta^2$ in Equation~\eqref{eq:delta-def} but instead of measuring how much worse a random update is when compared to a steepest update.

\begin{theorem} \label{thm:greedy_acc_pursuit}
Let $f$ be a convex function and $\cA$ be a symmetric compact set. Then the output of algorithm~\ref{algo:ACDM} for any $t\geq 1$ converges with the following rate:\vspace{-1mm}
$$
    \bbE[f(\bx_t)] - f(\bx^\star) \leq \frac{2L \nu}{t(t+1)}\norm{\bx^\star - \bx_0}^2_\bP \,.
  $$
\end{theorem}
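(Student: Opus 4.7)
The plan is a potential-function (estimate-sequence) argument tailored to the two-direction scheme: the greedy $\bx$-update must be benchmarked against a random one via the constant $\nu$, and the deformed inner product $\inner{\cdot}{\bP\cdot}$ makes $\tilde{\bz}_t$ unbiased on $\lin(\cA)$ (since $\bbE[\tilde{\bz}_t\tilde{\bz}_t^\top]\bP\bd=\bd$ for $\bd\in\lin(\cA)\subseteq\mathrm{range}(\tilde{\bP})$). I first invoke Lemma~\ref{lem:v-min-psi} to identify $\bv_{t+1}=\argmin\psi_{t+1}$. Since $\psi_{t+1}$ is a quadratic with Hessian $\bP$ whose minimum $\bv_t$ gets shifted by the new linear term at step $t$, a direct expansion will yield a per-step identity for $\bbE_{\tilde{\bz}_t}[\psi_{t+1}(\bv_{t+1})\mid\cF_t]$ with three contributions: the previous value $\psi_t(\bv_t)$; an expected linear form $\alpha_{t+1}\bigl(f(\by_t)+\inner{\nabla f(\by_t)}{\bv_t-\by_t}\bigr)$ (produced precisely because of the unbiasedness above); and a negative quadratic penalty
\begin{equation*}
-\tfrac{\alpha_{t+1}^2}{2}\,\bbE_{\tilde{\bz}_t}\!\bigl[(\tilde{\bz}_t^\top\nabla f(\by_t))^2\norm{\tilde{\bz}_t}_\bP^2\mid\cF_t\bigr].
\end{equation*}

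On the $\bx$ side, $L$-smoothness applied to the line-search step along $\bz_t=\lmo_\cA(\nabla f(\by_t))$ gives the usual decrement $f(\bx_{t+1})\le f(\by_t)-\inner{\nabla f(\by_t)}{\bz_t}^2/(2L\norm{\bz_t}_2^2)$. The definition of $\nu$, instantiated at $\bd=\nabla f(\by_t)$, lets me rewrite this as
\begin{equation*}
f(\bx_{t+1}) \le f(\by_t) - \tfrac{1}{2L\nu}\,\bbE_{\tilde{\bz}_t}\!\bigl[(\tilde{\bz}_t^\top\nabla f(\by_t))^2\norm{\tilde{\bz}_t}_\bP^2\mid\cF_t\bigr].
\end{equation*}
The algorithmic stepsize rule $\alpha_{t+1}^2 L\nu=\beta_{t+1}$ is exactly what equates the coefficient $\beta_{t+1}/(2L\nu)$ of this decrement with the $\alpha_{t+1}^2/2$ penalty in the $\psi$-identity, so that when I form the combined quantity $\beta_{t+1}f(\bx_{t+1})+\bbE[\psi_{t+1}(\bv_{t+1})\mid\cF_t]$ the two random-decrement pieces cancel exactly, leaving only linear-in-gradient terms.

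Those remaining linear terms I collapse using the Nesterov coupling identity $\alpha_{t+1}(\bv_t-\by_t)=\beta_t(\by_t-\bx_t)$ (which the choice $\by_t=(1-\tau_t)\bx_t+\tau_t\bv_t$ with $\tau_t=\alpha_{t+1}/\beta_{t+1}$ encodes), plus two applications of convexity, $f(\bx_t)\ge f(\by_t)+\inner{\nabla f(\by_t)}{\bx_t-\by_t}$ and $f(\bx^\star)\ge f(\by_t)+\inner{\nabla f(\by_t)}{\bx^\star-\by_t}$. This sets up an induction on $t$ proving the potential-function invariant
\begin{equation*}
\bbE\!\bigl[\beta_t f(\bx_t) + \psi_t(\bx^\star) - \psi_t(\bv_t)\bigr] \le \beta_t f(\bx^\star) + \psi_0(\bx^\star),
\end{equation*}
with the base case at $t=0$ holding as equality since $\bv_0=\bx_0$. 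Because $\psi_t$ is a quadratic with minimum $\bv_t$ and Hessian $\bP$, the difference $\psi_t(\bx^\star)-\psi_t(\bv_t)=\tfrac12\norm{\bx^\star-\bv_t}_\bP^2$ is nonnegative, so rearranging the invariant yields $\beta_t(\bbE[f(\bx_t)]-f(\bx^\star))\le\psi_0(\bx^\star)=\tfrac12\norm{\bx^\star-\bx_0}_\bP^2$. A standard induction on the recursion $\alpha_{t+1}^2 L\nu=\beta_t+\alpha_{t+1}$ (via $\sqrt{\beta_{t+1}}-\sqrt{\beta_t}\ge 1/(2\sqrt{L\nu})$) then supplies $\beta_t\ge t(t+1)/(4L\nu)$, closing the argument.

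The main obstacle I expect is the expectation bookkeeping. The greedy $\bz_t$ is $\cF_t$-measurable (deterministic given $\by_t$), while $\tilde{\bz}_t$ is the genuinely new randomness at step $t$, and $\by_t$ itself depends on all earlier $\tilde{\bz}_s$. The smoothness bound on $f(\bx_{t+1})$ must therefore be applied pathwise in $\tilde{\bz}_t$ and only then paired with the conditional identity for $\psi_{t+1}(\bv_{t+1})$; verifying that $\nu$ is precisely the constant that makes the two random-decrement contributions cancel under the chosen $\alpha_{t+1}$, and that the unbiasedness carries through despite $\nabla f(\by_t)$ depending on all previous samples, is the crux.
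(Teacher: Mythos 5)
Your proposal follows essentially the same estimate-sequence argument as the paper: the model $\psi_t$, the unbiasedness of $\tilde{\bz}_t$ under the inner product $\langle\cdot,\bP\cdot\rangle$, the greedy progress bound converted via $\nu$ into the expected quadratic penalty so that the choice $\alpha_{t+1}^2L\nu=\beta_{t+1}$ cancels it, the coupling $\alpha_{t+1}(\bv_t-\by_t)=\beta_t(\by_t-\bx_t)$, and the growth of $\beta_t$ --- merely packaged as a single potential invariant rather than the paper's separate upper and lower bounds on $\psi_t$. The only slip is the very last step: telescoping $\sqrt{\beta_{t+1}}-\sqrt{\beta_t}\ge 1/(2\sqrt{L\nu})$ yields only $\beta_t\ge t^2/(4L\nu)$, so to recover the stated $t(t+1)$ denominator you need the paper's slightly sharper induction $\alpha_t\ge t/(2L\nu)$ applied to $\beta_t=\sum_{i\le t}\alpha_i$.
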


\begin{proof}
We extend the proof technique of~\cite{Lee13,stich2013optimization} to allow for general atomic updates. The analysis can be found in Appendix~\ref{subsec:proof}
\end{proof}

Once we understand the convergence of the greedy approach, the analysis of accelerated random pursuit can be derived easily. Here, we state the rate of convergence for accelerated random pursuit:
\begin{theorem} \label{thm:random_pursuit_acc}
Let $f$ be a convex function and $\cA$ be a symmetric set. Then the output of the algorithm~\ref{algo:RAMP} for any $t\geq 1$ converges with the following rate:\vspace{-1mm}
$$\bbE [f(\bx_t)] - f(\bx^\star) \leq \frac{2L \nu'}{t(t+1)}\norm{\bx^\star - \bx_0}^2_\bP\,,$$ where\vspace{-2mm}
$$
\nu'  \leq \max_{\bd\in\lin(\cA)}
\frac{\bbE\encase{ ({\bz}_t^\top \bd)^2 \norm{{\bz}_t}_\bP^2}}{\bbE\encase{ ({\bz}_t^\top \bd)^2/ \norm{{\bz}_t}_2^2}}\,.
$$
\end{theorem}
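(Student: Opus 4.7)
The plan is to mirror the proof of Theorem~\ref{thm:greedy_acc_pursuit} within the linear coupling / estimate sequence framework of \citet{Lee13,stich2013optimization}, with two simplifications: in Algorithm~\ref{algo:RAMP} the same random sample $\bz_t$ drives both the $\bx$- and the $\bv$-update, and there is no greedy step to relate back to the random sampling. The definition of $\nu'$ is exactly the constant that arises to close the estimate sequence inequality under this single-sample update.

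First I would assemble the three ingredients. \emph{(i) Smoothness:} $f(\bx_{t+1}) \leq f(\by_t) - \tfrac{1}{2L}(\bz_t^\top \nabla f(\by_t))^2/\|\bz_t\|_2^2$. \emph{(ii) $\bv$-update in the $\bP$-norm:} expanding $\|\bv_{t+1}-\bx^\star\|_\bP^2$ and using the key identity $\bbE[\bz_t\bz_t^\top]\bP = \tilde\bP\bP = I$ on $\lin(\cA)$ (which comes from the assumption $\lin(\cA)\subseteq\text{range}(\tilde\bP)$, as in Lemma~\ref{lem:v-min-psi}) yields
\[
\bbE[\|\bv_{t+1}-\bx^\star\|_\bP^2] = \|\bv_t-\bx^\star\|_\bP^2 - 2\alpha_{t+1}\langle\nabla f(\by_t),\bv_t-\bx^\star\rangle + \alpha_{t+1}^2\,\bbE[(\bz_t^\top\nabla f(\by_t))^2\|\bz_t\|_\bP^2].
\]
\emph{(iii) Coupling identity:} the choice $\by_t=(1-\tau_t)\bx_t+\tau_t\bv_t$ with $\tau_t=\alpha_{t+1}/\beta_{t+1}$ gives $\by_t-\bv_t = \tfrac{\beta_t}{\alpha_{t+1}}(\bx_t-\by_t)$, so by convexity $\alpha_{t+1}\langle\nabla f(\by_t),\by_t-\bv_t\rangle \leq \beta_t(f(\bx_t)-f(\by_t))$.

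Combining (i)--(iii) with the convexity bound $f(\by_t)-f(\bx^\star)\leq \langle \nabla f(\by_t),\by_t-\bx^\star\rangle$ and splitting $\by_t-\bx^\star$ via $\bv_t$ yields, after rearrangement,
\[
\beta_{t+1}\bbE[f(\bx_{t+1})] + \tfrac{1}{2}\bbE[\|\bv_{t+1}-\bx^\star\|_\bP^2] \;\leq\; \beta_t f(\bx_t) + \alpha_{t+1}f(\bx^\star) + \tfrac{1}{2}\|\bv_t-\bx^\star\|_\bP^2 + E_t,
\]
with residual $E_t = \tfrac{\alpha_{t+1}^2}{2}\bbE[(\bz_t^\top\nabla f(\by_t))^2\|\bz_t\|_\bP^2] - \tfrac{\beta_{t+1}}{2L}\bbE[(\bz_t^\top\nabla f(\by_t))^2/\|\bz_t\|_2^2]$.

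The main obstacle, and the whole point of the definition of $\nu'$, is forcing $E_t\le 0$. Taking $\bd=\nabla f(\by_t)$ in the definition of $\nu'$ gives $\bbE[(\bz_t^\top\nabla f(\by_t))^2\|\bz_t\|_\bP^2] \leq \nu'\,\bbE[(\bz_t^\top\nabla f(\by_t))^2/\|\bz_t\|_2^2]$, so $E_t\le \tfrac{1}{2}\bbE[(\bz_t^\top\nabla f(\by_t))^2/\|\bz_t\|_2^2]\bigl(\alpha_{t+1}^2L\nu' - \beta_{t+1}\bigr)/L$, which vanishes by the step-size rule $\alpha_{t+1}^2 L\nu' = \beta_t+\alpha_{t+1} = \beta_{t+1}$. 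Subtracting $\beta_{t+1}f(\bx^\star)$ from both sides (and using $\beta_{t+1}=\beta_t+\alpha_{t+1}$), the inequality becomes the telescoping estimate $\Phi_{t+1}\le \Phi_t$ with $\Phi_t := \beta_t(\bbE[f(\bx_t)]-f(\bx^\star)) + \tfrac{1}{2}\bbE[\|\bv_t-\bx^\star\|_\bP^2]$. Iterating from $t=0$ with $\beta_0=0$ and $\bv_0=\bx_0$ gives $\beta_t(\bbE[f(\bx_t)]-f(\bx^\star))\leq \tfrac{1}{2}\|\bx^\star-\bx_0\|_\bP^2$, and a standard induction on the recurrence $\alpha_{t+1}=\sqrt{\beta_{t+1}/(L\nu')}$ (write $b_t=\sqrt{\beta_t}$ and note $b_{t+1}-b_t\geq \tfrac{1}{2\sqrt{L\nu'}}$) yields $\beta_t\geq t(t+1)/(4L\nu')$, delivering the claimed $\tfrac{2L\nu'}{t(t+1)}\|\bx^\star-\bx_0\|_\bP^2$ rate.
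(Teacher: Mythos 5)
Your proposal is correct and is essentially the paper's own argument: the paper proves this theorem by the same estimate-sequence construction used for Theorem~\ref{thm:greedy_acc_pursuit} (upper and lower bounds on the model $\psi_t$, whose combination at $\bx^\star$ is exactly your telescoping potential $\Phi_t$), with the unbiasedness identity $\tilde{\bP}\bP=\bI$ on $\lin(\cA)$, the coupling identity, and $\nu'$ defined by precisely the ratio-of-expectations inequality you use to force $E_t\le 0$. The one slip is in the final induction: setting $b_t=\sqrt{\beta_t}$ and using $b_{t+1}-b_t\ge \tfrac{1}{2\sqrt{L\nu'}}$ only yields $\beta_t\ge t^2/(4L\nu')$, i.e.\ a $2L\nu'/t^2$ rate, which is slightly weaker than the stated $t(t+1)$ denominator; to match the theorem, argue as the paper does that $\alpha_{t+1}L\nu'=\tfrac{1}{2}\bigl(1+\sqrt{4\beta_tL\nu'+1}\bigr)\ge\tfrac{t+1}{2}$ and sum $\beta_t=\sum_{i\le t}\alpha_i\ge t(t+1)/(4L\nu')$.
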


\paragraph{Discussion on Greedy Accelerated Coordinate Descent.}
The convergence rate for greedy accelerated coordinate descent can directly be obtained from the rate from accelerated matching pursuit. Let the atom set $\cA$ consist of the standard basis vectors $\{\be_i, i \in [n]\}$ and $\cZ$ be a uniform distribution over this set.  Then algorithm \ref{algo:RAMP} reduces to the accelerated randomized coordinate method (ACDM) of \cite{Lee13,Nesterov:2017} and we recover their rates. Instead if we use algorithm \ref{algo:ACDM}, we obtain a novel accelerated greedy coordinate method with a (potentially) better convergence rate.\footnote{Simultaneously (and independently) \cite{lu2018greedy} derived the same accelerated greedy coordinate algorithm.} 
\begin{lemma}\label{lem:coordinate_descent_acc}
	When $\cA = \{\be_i, i \in [n]\}$ and $\cZ$ is a uniform distribution over $\cA$, then $\bP = n \bI$, $\nu' = n$ and $\nu \in [1, n]$.
\end{lemma}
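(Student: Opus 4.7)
The plan is to compute each of the three quantities directly, since for the standard basis with uniform sampling every expectation reduces to a simple sum that evaluates in closed form.

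First I would compute $\tilde{\bP} = \bbE_{\bz \sim \cZ}[\bz \bz^\top] = \frac{1}{n}\sum_{i=1}^n \be_i \be_i^\top = \frac{1}{n}\bI$. Since $\tilde{\bP}$ is invertible, the pseudo-inverse coincides with the inverse and $\bP = n\bI$. This immediately gives $\|\be_i\|_\bP^2 = n$ and $\|\be_i\|_2^2 = 1$, the two ingredients needed for the remaining computations.

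Next, for $\nu'$, I would plug $\bz_t = \be_i$ drawn uniformly into the defining ratio. The numerator becomes $\bbE[(\bz_t^\top \bd)^2 \|\bz_t\|_\bP^2] = \frac{1}{n}\sum_i d_i^2 \cdot n = \|\bd\|_2^2$, while the denominator becomes $\bbE[(\bz_t^\top \bd)^2/\|\bz_t\|_2^2] = \frac{1}{n}\|\bd\|_2^2$. Their ratio is identically $n$ for every nonzero $\bd \in \lin(\cA) = \bbR^n$, so the smallest valid $\nu'$ equals $n$.

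For $\nu$, I first note that since we work with the non-symmetric atom set $\cA = \{\be_i\}$, the linear minimization oracle returns $\bz(\bd) = \be_{i^\star}$ with $i^\star = \argmax_i d_i$, so $\|\bz(\bd)\|_2^2 = 1$ and $(\bz(\bd)^\top \bd)^2 = \|\bd\|_\infty^2$. The numerator expectation $\bbE[(\tilde{\bz}_t^\top\bd)^2 \|\tilde{\bz}_t\|_\bP^2] = \|\bd\|_2^2$ is unchanged from the $\nu'$ calculation (same uniform sampling over the $\be_i$), so the ratio reduces to $\|\bd\|_2^2/\|\bd\|_\infty^2$. Standard norm inequalities give $1 \leq \|\bd\|_2^2/\|\bd\|_\infty^2 \leq n$, with the lower bound attained at $\bd = \be_j$ and the upper bound at $\bd = \mathbf{1}$; hence the best valid $\nu$ lies in $[1,n]$ and varies with the gradient direction, which is exactly where the potential acceleration over random pursuit comes from.

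There is no real obstacle — the proof is entirely a plug-and-chug computation once $\bP = n\bI$ is in hand. The only points that require care are (i) verifying that $\tilde{\bP}$ is invertible so the pseudo-inverse matches the ordinary inverse, and (ii) being explicit that the statement $\nu \in [1,n]$ refers to the range of the best valid $\nu$ over possible gradient directions $\bd$, not a single universal constant: the worst case $\nu = n$ matches $\nu'$, but for sparse gradients $\nu$ can be as small as $1$, which is what underlies the improved rate of Algorithm~\ref{algo:ACDM} over Algorithm~\ref{algo:RAMP}.
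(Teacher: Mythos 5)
Your computation is correct and is exactly the ``simple computation'' the paper's proof alludes to without spelling out: $\tilde{\bP}=\frac{1}{n}\bI$, hence $\bP=n\bI$, the ratio defining $\nu'$ is identically $n$, and the ratio defining $\nu$ reduces to $\|\bd\|_2^2/\|\bd\|_\infty^2\in[1,n]$. One small caveat: the identity $(\bz(\bd)^\top\bd)^2=\|\bd\|_\infty^2$ needs the \lmo to range over the symmetric set $\{\pm\be_i\}$ (as in the hypothesis of Theorem~\ref{thm:greedy_acc_pursuit}), not the one-sided set you invoke --- with $i^\star=\argmax_i d_i$ alone it fails whenever the largest-magnitude entry of $\bd$ is negative.
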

\section{Empirical Evaluation} \label{sec:expts}
In this section we aim at empirically validate our theoretical findings. In both experiments we use 1 and the intrinsic dimensionality of $\lin(\cA)$ as $\nu$ and $\nu'$ respectively. Note that a value of $\nu$ smaller than $\nu'$ represents the best case for the steepest update. We implicitly assume that the worst case in which a random update is as good as the steepest one never happens.

\textbf{Toy Data:} First, we report the function value while minimizing the squared distance between the a random 100 dimensional signal with both positive and negative entries and its sparse representation in terms of atoms. We sample a random dictionary containing 200 atoms which we then make symmetric. The result is depicted in Figure~\ref{fig:synth_data}. As anticipated from our analysis, the accelerated schemes converge much faster than the non-accelerated variants. Furthermore, in both cases the steepest update converge faster than the random one, due to a better dependency on the dimensionality of the space. 

\textbf{Real Data:}
We use the under-sampled Urban HDI Dataset from which we extract the dictionary of atoms using the hierarchical clustering approached of~\cite{gillis2015hierarchical}. This dataset contains 5'929 pixels, each associated with 162 hyperspectral features. The number of dictionary elements is 6, motivated by the fact that 6 different physical materials are depicted in this HSI data \cite{gillis2018fast}. We approximate each pixel with a linear combination of the dictionary elements by minimizing the square distance between the observed pixel and our approximation. We report in Figure~\ref{fig:real_data} the loss as an average across all the pixels:\vspace{-1mm}
\begin{align*}
\min_{\bx_i\in\lin(\cA)} \frac{1}{N}\sum_{i=1}^N \| \bx_i - \bb_i\|^2
\end{align*}
\begin{figure}
\centering
\begin{minipage}{0.48\textwidth}
\includegraphics[width=8cm,height=5cm]{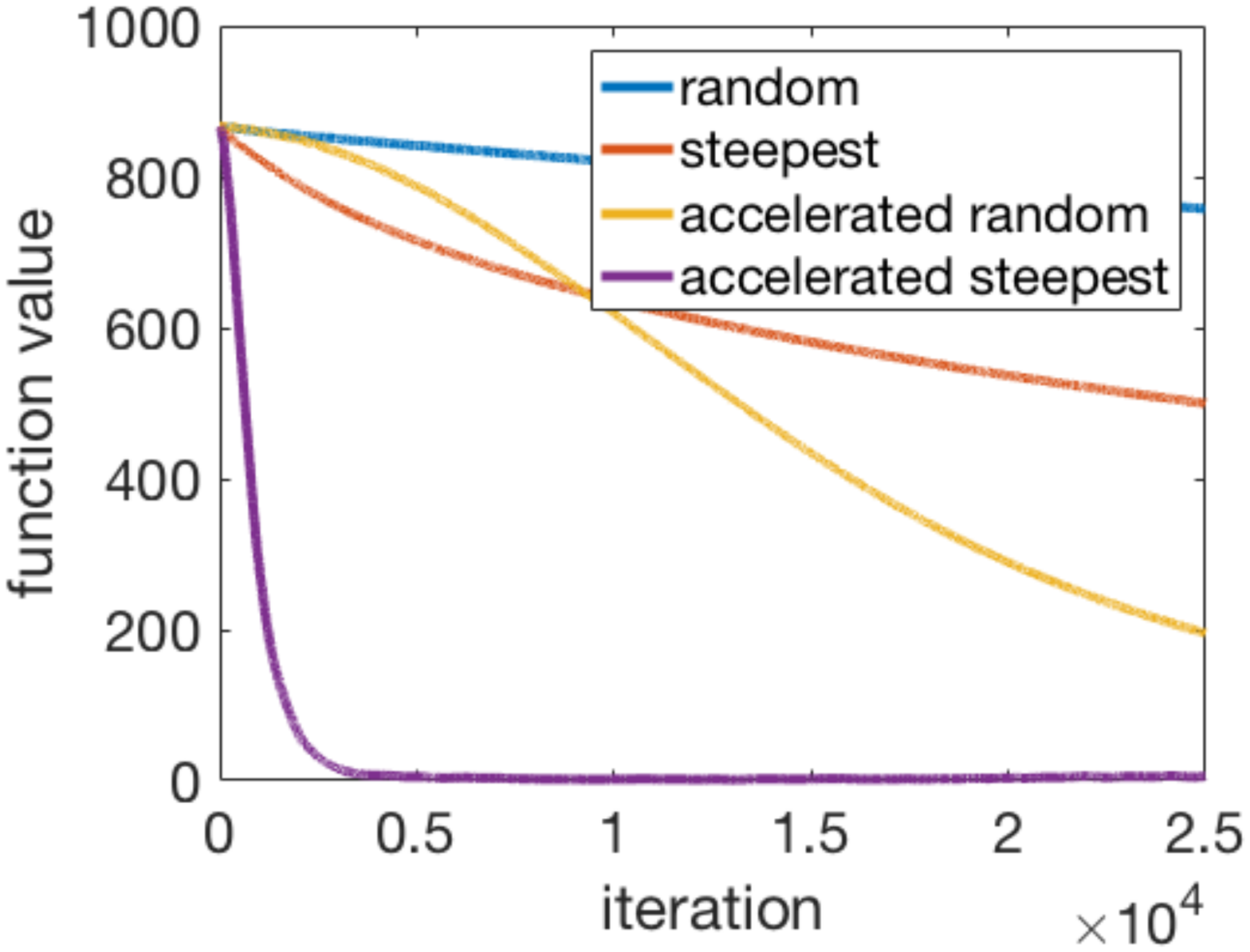}
\vspace{-3mm}
\caption{loss for synthetic data}\label{fig:synth_data}
\end{minipage}
\begin{minipage}{0.48\textwidth}
\includegraphics[width=8cm,height=5cm]{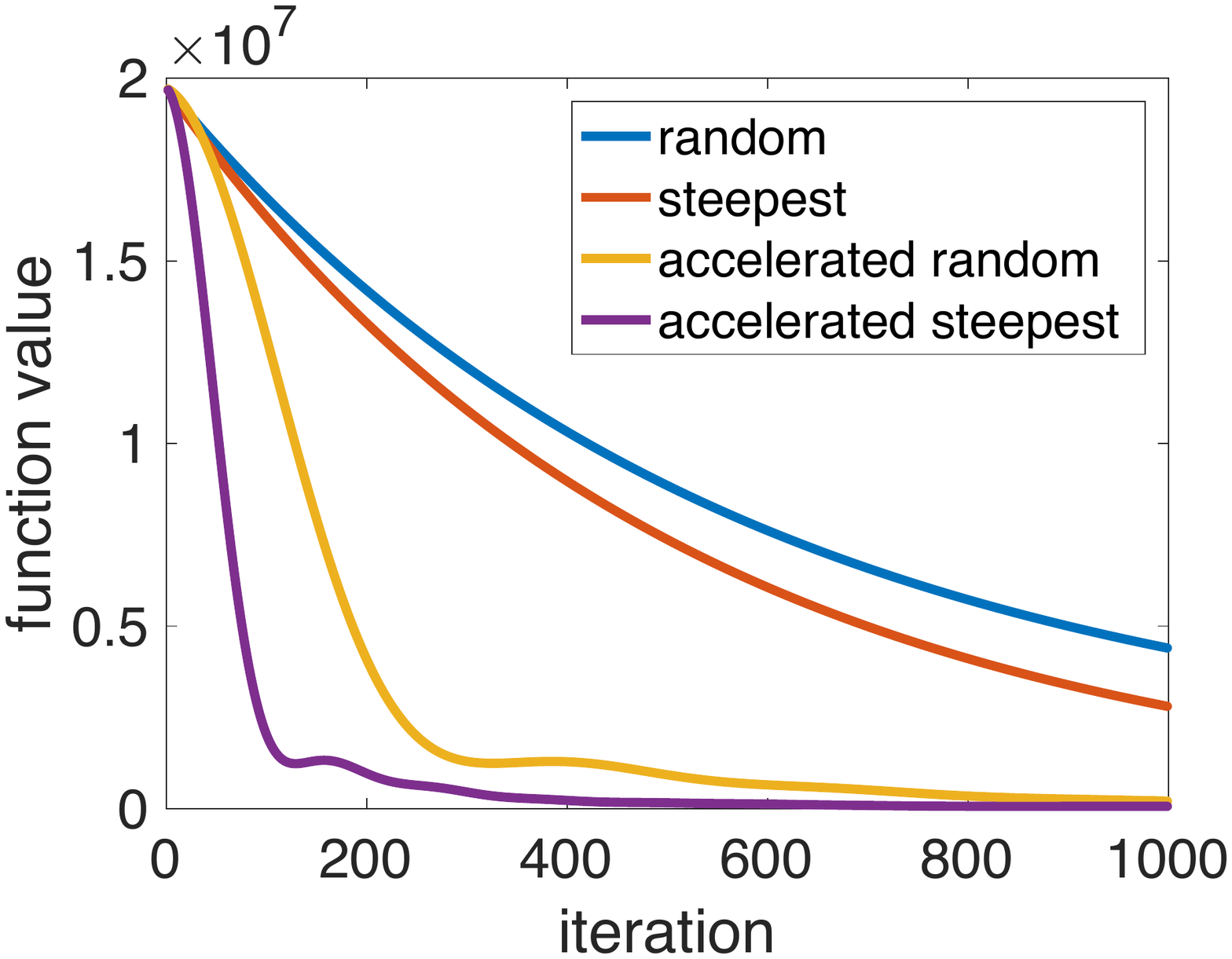}
\vspace{-3mm}
\caption{loss for  hyperspectral data}\label{fig:real_data}
\end{minipage}
\end{figure}

We notice that as expected, the steepest matching pursuit converges faster than the random pursuit, but as expected both of them converge at the same regime. On the other hand, the accelerated scheme converge much faster than the non-accelerated variants. Note that the acceleration kicks in only after a few iterations as the accelerated rate has a worse dependency on the intrinsic dimensionality of the linear span than the non accelerated algorithms.
We notice that the speedup of steepest MP is much more evident in the synthetic data. The reason is that this experiment is much more high dimensional than the hyperspectral data. Indeed, the span of the dictionary is a 6 dimensional manifold in the latter and the full ambient space in the former and the steepest update yields a better dependency on the dimensionality.
\vspace{-4mm}
\section{Conclusions}
In this paper we presented a unified analysis of matching pursuit and coordinate descent algorithms. As a consequence, we exploit the similarity between the two to obtain the best of both worlds: tight sublinear and linear rates for steepest coordinate descent and the first accelerated rate for matching pursuit and steepest coordinate descent. Furthermore, we discussed the relation between the steepest and the random directions by viewing the latter as an approximate version of the former. An affine invariant accelerated proof remains an open problem.

\section*{Acknowledgements} 
FL is supported by Max Planck/ETH Center for Learning Systems and partially supported by ETH core funding (to GR). SPK and MJ acknowledge support by SNSF project 175796, and SUS by the Microsoft Research Swiss JRC.
\bibliography{bibliography}
\bibliographystyle{icml2018}
\clearpage
\newpage
\onecolumn
\appendix
\begin{center}
{\centering \LARGE Appendix }
\vspace{1cm}
\sloppy

\section{Sublinear Rates}
\begin{reptheorem}{lem:LwithRadius}
Assume $f$ is $L$-smooth w.r.t. a given norm $\|\cdot\|$, over $\lin(\cA)$ where $\cA$ is symmetric.
Then,
\begin{equation}
 L_\cA \leq L \, \radius_{\norm{\cdot}}(\cA)^2 \,.
\end{equation}
\begin{proof}
Let $D(\by,\bx):= f(\by)- f(\bx) + \gamma \langle \nabla f(\bx), \by - \bx \rangle$
By the definition of smoothness of $f$ w.r.t. $\|\cdot\|$,
\begin{eqnarray*}
D(\by,\bx) \leq \frac{L}{2} \| \by - \bx\|^2 \,.
\end{eqnarray*}

Hence, from the definition of $L_\cA$,
\begin{align*}
L_\cA &\leq \sup_{\substack{\bx,\by\in\lin(\cA)\\\by = \bx + \gamma\bz \\\|\bz\|_\cA=1, \gamma\in \bbR_{>0}}}
 \frac{2}{\gamma^2} \frac{L}{2}  \| \by - \bx\|^2 \\
&= L \sup_{\bz \ s.t. \|\bz\|_\cA = 1} \, \| \bs\|^2 \\
&= L \, \radius_{\norm{\cdot}}(\cA)^2 \ . \qedhere
\end{align*}
\end{proof}
\end{reptheorem}

The definition of the smoothness constant w.r.t. the atomic norm yields the following quadratic upper bound:
\begin{equation}
 L_{\cA} = \sup_{\substack{\bx,\by\in\lin(\cA)\\\by = \bx + \gamma\bz \\\|\bz\|_\cA=1, \gamma\in \bbR_{>0}}}\frac{2}{\gamma^2}\left[ f(\by)- f(\bx) +  \langle \nabla f(\bx), \by - \bx \rangle\right] \,.
\end{equation}
Furthermore, let:
\begin{equation}
R_\cA^2 = \max_{\substack{\bx\in\lin(\cA)\\ f(\bx)\leq f(\bx_0)}}\|\bx-\bx^\star\|^2_\cA \,.
\end{equation}
Now, we show that the algorithm we presented is affine invariant.
An optimization method is called \emph{affine invariant} if it is invariant
under affine transformations of the input problem: If one chooses any
re-parameterization of the domain~$\domain$ by a \emph{surjective} linear or
affine map $\bM:\hat\domain\rightarrow\domain$, then the ``old'' and ``new''
optimization problems $\min_{\bx\in\domain}f(\bx)$ and
$\min_{\hat\bx\in\hat\domain}\hat f(\hat\bx)$ for $\hat f(\hat\bx):=f(\bM\hat\bx)$
look the same to the algorithm. Note that $\nabla \hat f = \bM^T \nabla f$.

First of all, let us note that $L_\cA$ is affine invariant as it does not depend on any norm.
Now:
\begin{align*}
\bM\hat\bx_{t+1} &= \bM\left( \hat\bx_{t} + \frac{\langle \nabla \hat f(\hat\bx_t), \hat\bz_t\rangle}{L_\cA}\hat\bz_t\right)\\
&= \bM\hat\bx_{t} + \frac{\langle \nabla \hat f(\hat\bx_t), \hat\bz_t\rangle}{L_\cA}\bM\hat\bz_t\\
&= \bx_{t} + \frac{\langle \nabla \hat f(\hat\bx_t), \hat\bz_t\rangle}{L_\cA}\bz_t\\
&= \bx_{t} + \frac{\langle \bM^T\nabla f(\bx_t), \hat\bz_t\rangle}{L_\cA}\bz_t\\
&= \bx_{t} + \frac{\langle \nabla f(\bx_t), \bM\hat\bz_t\rangle}{L_\cA}\bz_t\\
&= \bx_{t} + \frac{\langle \nabla f(\bx_t), \bz_t\rangle}{L_\cA}\bz_t\\
&= \bx_{t+1} \,.
\end{align*}
Therefore the algorithm is affine invariant.

\subsection{Affine Invariant Sublinear Rate}
\begin{reptheorem}{thm:sublinear_MP_rate}
Let $\cA \subset \cH$ be a closed and bounded set. We assume that $\|\cdot\|_\cA$ is a norm over $\lin(\cA)$. Let $f$ be convex and $L_\cA$-smooth w.r.t. the norm $\|\cdot\|_\cA$ over $\lin(\cA)$, and let $R_\cA$ be the radius of the level set of $\bx_0$ measured with the atomic norm.
Then, Algorithm~\ref{algo:affineMP} converges for $t \geq 0$ as
\[
f(\bx_{t+1}) - f(\bx^\star) \leq \frac{2L_{\cA}R_\cA^2}{\delta^2(t+2)} \,,
\]
where $\delta \in (0,1]$ is the relative accuracy parameter of the employed approximate \lmo~\eqref{eq:inexactLMOMP}.
\end{reptheorem}

\begin{proof}

Recall that $\tilde{\bz}_t$ is the atom selected in iteration $t$ by the approximate \lmo  defined in \eqref{eq:inexactLMOMP}. We start by upper-bounding $f$  using the definition of $L_\cA$ as follows:
\begin{eqnarray}
f(\bx_{t+1}) &\leq &  \min_{\gamma\in\bbR} f(\bx_t) + \gamma \langle \nabla f(\bx_t),
 \tilde{\bz}_t \rangle   + \frac{\gamma^2}{2} L_{\cA}\|\bz\|^2_\cA \nonumber \\
& =& \min_{\gamma\in\bbR} f(\bx_t) + \gamma \langle \nabla f(\bx_t),
 \tilde{\bz}_t \rangle   + \frac{\gamma^2}{2} L_{\cA} \nonumber \\
& \leq &  f(\bx_t) - \frac{\langle\nabla f(\bx_t),\tilde{\bz}_t\rangle^2}{2L_{\cA}} \nonumber\\
& = &  f(\bx_t) - \frac{\langle\nabla_\parallel f(\bx_t),\tilde{\bz}_t\rangle^2}{2L_{\cA}}\nonumber \\
& \leq &  f(\bx_t) - \delta^2\frac{\langle\nabla_\parallel f(\bx_t),\bz_t\rangle^2}{2L_{\cA}} \,. \nonumber
\end{eqnarray}
Where $\nabla_\parallel f$ is the parallel component of the gradient wrt the linear span of $\cA$. Note that $\|\bd\|_{\cA*}:= \sup\left\lbrace \langle\bz,\bd\rangle, \bz\in\cA\right\rbrace$ is the dual of the atomic norm.
Therefore, by definition:
\begin{align*}
\langle\nabla_\parallel f(\bx_t),\bz_t\rangle^2 = \|-\nabla_\parallel f(\bx_t)\|^2_{\cA*} \,,
\end{align*}
which gives:
\begin{align*}
f(\bx_{t+1}) &\leq f(\bx_t) - \delta^2\frac{1}{2L_{\cA}} \|\nabla_\parallel f(\bx_t)\|^2_{\cA*}\\
&\leq f(\bx_t) - \delta^2\frac{1}{2L_{\cA}}\frac{ \left(-\langle\nabla_\parallel f(\bx_t),\bx_t-\bx^\star\rangle\right)^2}{R_\cA^2}\\
&= f(\bx_t) - \delta^2\frac{1}{2L_{\cA}}\frac{ \left(\langle\nabla_\parallel f(\bx_t),\bx_t-\bx^\star\rangle\right)^2}{R_\cA^2}\\
&\leq f(\bx_t) - \delta^2\frac{1}{2L_{\cA}}\frac{ \left(f(\bx_t)-f(\bx^\star)\right)^2}{R_\cA^2} \,,
\end{align*}
where the second inequality is Cauchy-Schwarz and the third one is convexity.
Which gives:
\begin{align*}
\epsilon_{t+1} &\leq \frac{2L_{\cA}R_\cA^2}{\delta^2(t+2)} \,. \qedhere
\end{align*}
\end{proof}

\subsection{Randomized Affine Invariant Sublinear Rate}

For random sampling of $\bz$ from a distribution over $\cA$, let
\begin{equation}
\hat\delta^2 := \min_{\bd\in\lin\cA}\frac{\bbE_{\bz \in \cA} \langle \bd,\bz\rangle^2}{\|\bd\|_{\cA*}^2} \,.
\end{equation}
\begin{reptheorem}{thm:rand_purs_sub}
Let $\cA \subset \cH$ be a closed and bounded set. We assume that $\|\cdot\|_\cA$ is a norm. Let $f$ be convex and $L_\cA$ smooth w.r.t. the norm $\|\cdot\|_\cA$ over $\lin(\cA)$ and let $R_\cA$ be the radius of the level set of $\bx_0$ measured with the atomic norm.
Then, Algorithm~\ref{algo:affineMP} converges for $t \geq 0$ as
\[
\bbE_\bz \big[f(\bx_{t+1}) \big] - f(\bx^\star) \leq \frac{2L_{\cA}R_\cA^2}{\hat\delta^2(t+2)} \,,
\]
when the $\lmo$ is replaced with random sampling of $\bz$ from a distribution over $\cA$.
\end{reptheorem}

\begin{proof}

Recall that $\tilde{\bz}_t$ is the atom selected in iteration $t$ by the approximate \lmo  defined in \eqref{eq:inexactLMOMP}. We start by upper-bounding $f$  using the definition of $L_\cA$ as follows
\begin{eqnarray}
\bbE_\bz f(\bx_{t+1}) &\leq & \bbE_\bz\left[ \min_{\gamma\in\bbR} f(\bx_t) + \gamma \langle \nabla f(\bx_t),
 \bz \rangle   + \frac{\gamma^2}{2} L_{\cA}\|\bz\|^2_\cA\right] \nonumber \\
& =& \bbE_\bz\left[\min_{\gamma\in\bbR} f(\bx_t) + \gamma \langle \nabla f(\bx_t),
 \bz \rangle   + \frac{\gamma^2}{2} L_{\cA}\right] \nonumber \\
& \leq &  f(\bx_t) - \frac{\bbE_\bz\left[\langle\nabla f(\bx_t),\bz\rangle^2\right]}{2L_{\cA}} \nonumber\\
& = &  f(\bx_t) - \frac{\bbE_\bz\left[\langle\nabla_\parallel f(\bx_t),\bz\rangle^2\right]}{2L_{\cA}}\nonumber \\
& \leq &  f(\bx_t) - \hat\delta^2\frac{\langle\nabla_\parallel f(\bx_t),\bz_t\rangle^2}{2L_{\cA}} \,.\nonumber 
\end{eqnarray}
The rest of the proof proceeds as in Theorem~\ref{thm:sublinear_MP_rate}.
\end{proof}

\section{Linear Rates}
\subsection{Affine Invariant Linear Rate}
Let us first the fine the affine invariant notion of strong convexity based on the atomic norm:
\begin{align*}
\mu_\cA := \inf_{\substack{\bx,\by\in\lin\cA\\\bx\neq\by}} \frac{2}{\|\by-\bx\|_{\cA}^2}\left[f(\by)-f(\bx)-\langle \nabla f(\bx),\by -\bx\rangle\right] \,.
\end{align*}
Let us recall the definition of \textit{minimal directional width} from~\cite{locatello2017unified}:
\begin{align*}
\mdw := \min_{\substack{\bd\in\lin(\cA)\\\bd\neq 0}} \max_{z\in\cA}\langle \frac{\bd}{\|\bd\|},\bz\rangle \,.
\end{align*}
Then, we can relate our new definition of strong convexity with the $\mdw$ as follows.
\begin{reptheorem}{thm:mu_mdw}
Assume $f$ is $\mu$ strongly convex wrt a given norm $\|\cdot\|$ over $\lin(\cA)$ and $\cA$ is symmetric. Then:
\begin{align*}
\mu_\cA \geq \mdw^2\mu \,.
\end{align*}
\end{reptheorem}

\begin{proof}
First of all, note that for any $\bx,\by\in\lin(\cA)$ with $\bx\neq\by$ we have that:
\begin{align*}
\langle\nabla f(x),x-y\rangle^2\leq \|\nabla f(\bx)\|_{\cA*}^2\|\bx - \by\|_{\cA}^2 \,.
\end{align*}
Therefore:
\begin{align*}
\mu_\cA &= \inf_{\substack{\bx,\by\in\lin\cA\\\bx\neq\by}}\frac{2}{\|\by-\bx\|_{\cA}^2} D(\bx,\by)\\
&\geq \inf_{\substack{\bx,\by,\bd\in\lin\cA\\\bx\neq\by,\bd\neq 0}} \frac{\|\bd\|_{\cA*}^2}{\langle\bd,\bx-\by\rangle^2}2D(\bx,\by)\\
&\geq\inf_{\substack{\bx,\by,\bd\in\lin\cA\\\bx\neq\by,\bd\neq 0}}\frac{\|\bd\|_{\cA*}^2}{\langle\bd,\bx-\by\rangle^2}\mu \|\bx-\by\|^2\\
&\geq\inf_{\substack{\bx,\by,\bd\in\lin\cA\\\bx\neq\by,\bd\neq 0}}\frac{\|\bd\|_{\cA*}^2}{\langle\bd,\frac{\bx-\by}{\|\bx-\by\|}\rangle^2}\mu\\
&\geq\inf_{\substack{\bx,\by,\bd\in\lin\cA\\\bx\neq\by,\bd\neq 0}}\frac{\|\bd\|_{\cA*}^2}{\|\bd\|^2}\mu\\
&\geq\inf_{\substack{\bd\in\lin\cA\\\bd\neq 0}} \max_{z}\frac{\langle \bd,\bz\rangle^2}{\|\bd\|^2}\mu\\
&=\mdw^2\mu \,. \qedhere
\end{align*}
\end{proof}
\begin{reptheorem}{thm:linear_rate}
(Part 1). 
Let $\cA \subset \cH$ be a closed and bounded set. We assume that $\|\cdot\|_\cA$ is a norm. Let $f$ be $\mu_\cA$-strongly convex and $L_\cA$-smooth w.r.t. the norm $\|\cdot\|_\cA$, both over $\lin(\cA)$.
Then, Algorithm~\ref{algo:affineMP} converges for $t \geq 0$ as
\[
\epsilon_{t+1} \leq \big(1 - \delta^2\frac{\mu_\cA}{L_{\cA}}\big)\epsilon_t \,.
\]
where $\epsilon_{t} := f(\bx_{t}) - f(\bx^\star)$.
\end{reptheorem}

\begin{proof}

Recall that $\tilde{\bz}_t$ is the atom selected in iteration $t$ by the approximate \lmo  defined in \eqref{eq:inexactLMOMP}. We start by upper-bounding $f$  using the definition of $L_\cA$ as follows
\begin{eqnarray}
f(\bx_{t+1}) &\leq &  \min_{\gamma\in\bbR} f(\bx_t) + \gamma \langle \nabla f(\bx_t),
 \tilde{\bz}_t \rangle   + \frac{\gamma^2}{2} L_{\cA}\|\bz\|^2_\cA \nonumber \\
& =& \min_{\gamma\in\bbR} f(\bx_t) + \gamma \langle \nabla f(\bx_t),
 \tilde{\bz}_t \rangle   + \frac{\gamma^2}{2} L_{\cA} \nonumber \\
& \leq &  f(\bx_t) - \frac{\langle\nabla f(\bx_t),\tilde{\bz}_t\rangle^2}{2L_{\cA}} \nonumber\\
& = &  f(\bx_t) - \frac{\langle\nabla_\parallel f(\bx_t),\tilde{\bz}_t\rangle^2}{2L_{\cA}}\nonumber \\
& \leq &  f(\bx_t) - \delta^2\frac{\langle\nabla_\parallel f(\bx_t),\bz_t\rangle^2}{2L_{\cA}}\nonumber\\
& = &  f(\bx_t) - \delta^2\frac{\langle-\nabla_\parallel f(\bx_t),\bz_t\rangle^2}{2L_{\cA}} \,. \nonumber 
\end{eqnarray}
Where $\|\bd\|_{\cA*}:= \sup\left\lbrace \langle\bz,\bd\rangle, \bz\in\cA\right\rbrace$ is the dual of the atomic norm.
Therefore, by definition:
\begin{align*}
\langle-\nabla_\parallel f(\bx_t),\bz_t\rangle^2 = \|\nabla_\parallel f(\bx_t)\|^2_{\cA*} \,,
\end{align*}
which gives:
\begin{align*}
f(\bx_{t+1}) &\leq f(\bx_t) - \delta^2\frac{1}{2L_{\cA}} \|\nabla_\parallel f(\bx_t)\|^2_{\cA*} \,.
\end{align*}
From strong convexity we have that:
\begin{align*}
f(\by)\geq f(\bx) + \langle\nabla f(\bx),\by-\bx\rangle + \frac{\mu_\cA}{2}\|\by-\bx\|_\cA^2 \,.
\end{align*}
Fixing $\by = \bx_t + \gamma(\bx^\star - \bx_t)$ and $\gamma = 1$ in the LHS and minimizing the RHS we obtain:
\begin{align*}
f(\bx^\star)&\geq f(\bx_t) - \frac{1}{2\mu_\cA}\frac{\langle\nabla f(\bx_t),\bx^\star-\bx_t\rangle}{\|\bx^\star-\bx_t\|_\cA^2}\\
&\geq f(\bx_t) - \frac{1}{2\mu_\cA}\|\nabla_\parallel f(\bx_t)\|_{\cA^*}^2 \,,
\end{align*}
where the last inequality is obtained by the fact that $\langle\nabla f(\bx_t),\bx^\star-\bx_t\rangle=\langle\nabla_{\parallel} f(\bx_t),\bx^\star-\bx_t\rangle$and  Cauchy-Schwartz.
Therefore:
\begin{align*}
\|\nabla f(\bx_t)\|_{\cA^*}\geq 2\epsilon_t\mu_\cA \,,
\end{align*}
which yields:
\begin{align*}
\epsilon_{t+1} &\leq\epsilon_t - \delta^2\frac{\mu_\cA}{L_{\cA}}\epsilon_t\,. \qedhere
\end{align*}

\end{proof}

\subsection{Randomized Affine Invariant Linear Rate}
\begin{reptheorem}{thm:linear_rate}
(Part 2). 
Let $\cA \subset \cH$ be a closed and bounded set. We assume that $\|\cdot\|_\cA$ is a norm. Let $f$ be $\mu_\cA$-strongly convex and $L_\cA$-smooth w.r.t. the norm $\|\cdot\|_\cA$, both over $\lin(\cA)$.
Then, Algorithm~\ref{algo:affineMP} converges for $t \geq 0$ as
\[
\bbE_\bz \left[\epsilon_{t+1}|\bx_t\right] \leq \big(1 - \hat\delta^2\frac{\mu_\cA}{L_{\cA}} \big)\epsilon_t \,,
\]
where $\epsilon_{t} := f(\bx_{t}) - f(\bx^\star)$, and the LMO direction $\bz$ is sampled randomly from $\cA$, from the same distribution as used in the definition of $\hat\delta$.
\end{reptheorem}

\begin{proof}

We start by upper-bounding $f$  using the definition of $L_\cA$ as follows
\begin{eqnarray}
\bbE_\bz \left[f(\bx_{t+1})\right] &\leq &  \bbE_\bz\left[ \min_{\gamma\in\bbR} f(\bx_t) + \gamma \langle \nabla f(\bx_t),
 \tilde{\bz}_t \rangle   + \frac{\gamma^2}{2} L_{\cA}\|\bz\|^2_\cA\right] \nonumber \\
& =& \bbE_\bz\left[\min_{\gamma\in\bbR} f(\bx_t) + \gamma \langle \nabla f(\bx_t),
 \tilde{\bz}_t \rangle   + \frac{\gamma^2}{2} L_{\cA} \right]\nonumber \\
& \leq &  f(\bx_t) - \bbE_\bz\left[\frac{\langle\nabla f(\bx_t),\tilde{\bz}_t\rangle^2}{2L_{\cA}}\right] \nonumber\\
& \leq &  f(\bx_t) - \hat\delta^2\frac{\langle\nabla f(\bx_t),\bz_t\rangle^2}{2L_{\cA}}\nonumber \\
& = &  f(\bx_t) - \hat\delta^2\frac{\langle\nabla_\parallel f(\bx_t),\tilde{\bz}_t\rangle^2}{2L_{\cA}}\nonumber \\
& = &  f(\bx_t) - \hat\delta^2\frac{\langle-\nabla_\parallel f(\bx_t),\tilde{\bz}_t\rangle^2}{2L_{\cA}} \,. \nonumber
\end{eqnarray}
The rest of the proof proceeds as in Part 1 of the proof of Theorem~\ref{thm:linear_rate}.
\end{proof}

\section{Accelerated Matching Pursuit}\label{app:acc_match_pursuit}
Our proof follows the technique for acceleration given in~\cite{Lee13,Nesterov:2017,Nesterov:2004gx,stich2013optimization} 

\begin{algorithm}[h]
\begin{algorithmic}[1]
  \STATE \textbf{init} $\bx_0 = \bv_0 = \by_0$, $\beta_0 = 0$, and $\nu$
  \STATE \textbf{for} {$t=0, 1 \dots T$}
     \STATE \qquad  Solve ${\alpha_{t+1}^2}{L\nu} = \beta_t + \alpha_{t+1}$
    \STATE  \qquad $\beta_{t+1} = \beta_t + \alpha_{t+1} $
    \STATE \qquad $\tau_t = \frac{\alpha_{t+1}}{\beta_{t+1}}$
   \STATE \qquad Compute $\by_{t} = (1- \tau_t)\bx_{t} + \tau_t \bv_t$
   \STATE \qquad Find $\bz_t := \lmo_\cA(\nabla f(\by_{t}))$
\STATE \qquad $\bx_{t+1} = \by_t - \frac{\langle \nabla f(\by_t),\bz_t\rangle}{L\|\bz_t\|_2^2}\bz_t$
\STATE \qquad  sample $\tilde{\bz}_t ~~ \sim \cZ$
\STATE \qquad $\bv_{t+1} = \bv_t - \alpha_{t+1}{\langle \nabla f(\by_t),\tilde{\bz}_t\rangle}\tilde{\bz}_t$
  \STATE \textbf{end for}
\end{algorithmic}
 \caption{Accelerated Matching Pursuit}
 \label{algo:ACDM_Appendix}
\end{algorithm}

\subsection{Proof of Convergence} \label{subsec:proof}
We define $\norm{\bx}^2_\bP = \bx^\top \bP\bx $. We start our proof by first defining the model function $\psi_t$. For $t=0$, we define :
$$\psi_{0}(\bx) = \frac{1}{2}\norm{\bx - \bv_0}_\bP^2 \,.$$
Then for $t >1$, $\psi_t$ is inductively defined as
\begin{align}
\psi_{t+1}(\bx) &= \psi_t(\bx) +  \alpha_{t+1} \Big( f(\by_t) + \langle \tilde{\bz}_t^\top \nabla f(\by_t) , \tilde{\bz}_t^\top \bP(\bx - \by_t)  \rangle \Big) \,. \label{eq:model_acc_random}
\end{align}

\begin{proof}[\textbf{Proof of Lemma~\ref{lem:v-min-psi}}]
  We will prove the statement inductively. For $t=0$, $\psi_{0}(\bx) = \frac{1}{2}\norm{\bx - \bv_0}_\bP^2$ and so the statement holds. Suppose it holds for some $t \geq 0$. Observe that the function $\psi_t(\bx)$ is a quadratic with Hessian $\bP$. This means that we can reformulate $\psi_t(\bx)$ with minima at $\bv_t$ as
  $$
    \psi_t(\bx) = \psi_t(\bv_t) + \frac{1}{2}\norm{\bx - \bv_t}_\bP^2 \,.
  $$
  Using this reformulation,
  \begin{align*}
    \argmin_{\bx}\psi_{t+1}(\bx) &= \argmin_{\bx} \Big\{ \psi_t(\bx) +  \alpha_{t+1} \Big( f(\by_t) + \langle \tilde{\bz}_t^\top \nabla f(\by_t) , \tilde{\bz}_t^\top \bP(\bx - \by_t)  \rangle \Big)\Big\}\\
    &= \argmin_{\bx} \Big\{ \psi_t(\bv_t) + \frac{1}{2}\norm{\bx - \bv_t}_\bP^2 +  \alpha_{t+1} \Big( f(\by_t) + \langle \tilde{\bz}_t^\top \nabla f(\by_t) , \tilde{\bz}_t^\top \bP(\bx - \by_t)  \rangle \Big)\Big\}\\
    &= \argmin_{\bx} \Big\{ \frac{1}{2}\norm{\bx - \bv_t}_\bP^2 +  \alpha_{t+1}  \langle \tilde{\bz}_t^\top \nabla f(\by_t) , \tilde{\bz}_t^\top \bP(\bx - \bv_t)  \rangle \Big\}\\
    &= \bv_t - \alpha_{t+1}{\langle \nabla f(\by_t),\tilde{\bz}_t\rangle}\tilde{\bz}_t\\
    &= \bv_{t+1}\,. \qedhere
  \end{align*}
\end{proof}
\begin{lemma}[Upper bound on $\psi_t(\bx)$]\label{lem:psi-upper}
  $$
    \bbE[\psi_t(\bx)] \leq \beta_t f(\bx) + \psi_0(\bx) \,.
  $$
\end{lemma}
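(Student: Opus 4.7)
The plan is to prove this by induction on $t$. The base case $t=0$ is immediate: since $\beta_0 = 0$, the inequality $\bbE[\psi_0(\bx)] \leq \beta_0 f(\bx) + \psi_0(\bx)$ reduces to an equality.

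For the inductive step, I would assume $\bbE[\psi_t(\bx)] \leq \beta_t f(\bx) + \psi_0(\bx)$ and take expectations on both sides of the recursive definition
\[
\psi_{t+1}(\bx) = \psi_t(\bx) + \alpha_{t+1}\Big(f(\by_t) + \langle \tilde{\bz}_t^\top\nabla f(\by_t),\, \tilde{\bz}_t^\top \bP(\bx-\by_t)\rangle\Big).
\]
Using the tower property, I would condition on the history before the draw of $\tilde{\bz}_t$ (so that $\by_t$ and $\bv_t$ are deterministic), then take the inner expectation over $\tilde{\bz}_t \sim \cZ$. The quadratic term rewrites as $\nabla f(\by_t)^\top \tilde{\bz}_t \tilde{\bz}_t^\top \bP(\bx-\by_t)$, so pulling expectation inside yields
\[
\nabla f(\by_t)^\top \bbE_{\tilde{\bz}_t}[\tilde{\bz}_t \tilde{\bz}_t^\top]\, \bP(\bx-\by_t) = \nabla f(\by_t)^\top \tilde{\bP}\bP\,(\bx-\by_t).
\]

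The key step is the identification $\tilde{\bP}\bP\,(\bx-\by_t) = \bx-\by_t$, which holds because $\bP = \tilde{\bP}^{\dagger}$ acts as the identity on $\text{range}(\tilde{\bP})$ and, by the standing assumption $\lin(\cA) \subseteq \text{range}(\tilde{\bP})$, the displacement $\bx - \by_t$ lies in $\lin(\cA)$ (since $\bv_0, \bx_0 \in \lin(\cA)$ and all the algorithm's updates move along atoms in $\cA$, while $\bx$ is taken in $\lin(\cA)$ for the bound to be meaningful). Once this is established, convexity of $f$ gives $f(\by_t) + \nabla f(\by_t)^\top(\bx-\by_t) \leq f(\bx)$, so the conditional expectation of the increment is at most $\alpha_{t+1} f(\bx)$. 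Combined with the inductive hypothesis and $\beta_{t+1} = \beta_t + \alpha_{t+1}$, this yields the claimed bound.

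The main obstacle is the containment argument $\bx - \by_t \in \text{range}(\tilde{\bP})$, which relies on both the assumption on $\cZ$ and on verifying that every iterate produced by Algorithm~\ref{algo:ACDM_Appendix} remains in $\lin(\cA)$; everything else is a routine application of linearity of expectation, convexity, and the inductive hypothesis.
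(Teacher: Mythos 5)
Your proof is correct and follows essentially the same route as the paper's: induction on $t$, linearity of expectation, the identity $\bbE[\tilde{\bz}_t\tilde{\bz}_t^\top]\bP = \tilde{\bP}\tilde{\bP}^{\dagger}$ acting as the identity on the relevant subspace, and convexity of $f$ to absorb the linearization into $\alpha_{t+1}f(\bx)$. If anything you are more careful than the paper, which writes $\bP^{-1}$ and leaves implicit both the conditioning that makes $\by_t$ deterministic and the range argument justifying $\tilde{\bP}\bP(\bx-\by_t)=\bx-\by_t$.
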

\begin{proof}
  We will also show this through induction. The statement is trivially true for $t=0$ since $\beta_0 = 0$. Assuming the statement holds for some $t \geq 0$,
  \begin{align*}
    \bbE[\psi_{t+1}(\bx)] &= \bbE\Big[\psi_t(\bx) +  \alpha_{t+1} \Big( f(\by_t) + \langle \tilde{\bz}_t^\top \nabla f(\by_t) , \tilde{\bz}_t^\top \bP(\bx - \by_t)  \rangle \Big)\Big]\\
    &= \bbE\Big[\psi_t(\bx)\Big] +  \alpha_{t+1}\bbE\Big[ \Big( f(\by_t) + \langle \tilde{\bz}_t^\top \nabla f(\by_t) , \tilde{\bz}_t^\top \bP(\bx - \by_t)  \rangle \Big)\Big] \\
    &\leq \beta_{t}f(\bx) + \psi_0(\bx) +  \alpha_{t+1} \Big( f(\by_t) + \nabla f(\by_t)^\top \bbE\Big[ \tilde{\bz}_t \tilde{\bz}_t^\top\Big] \bP(\bx - \by_t)  \rangle \Big)\\
    &= \beta_{t}f(\bx) + \psi_0(\bx) +  \alpha_{t+1} \Big( f(\by_t) + \nabla f(\by_t)^\top \bP^{-1} \bP (\bx - \by_t)  \rangle \Big)\\
    &= \beta_{t}f(\bx) + \psi_0(\bx) +  \alpha_{t+1} \Big( f(\by_t) + \nabla f(\by_t)^\top (\bx - \by_t)  \rangle \Big)\\
    &\leq  \beta_{t}f(\bx) + \psi_0(\bx) +  \alpha_{t+1} f(\bx)\,.
  \end{align*}
  In the above, we used the convexity of the function $f(\bx)$ and the definition of $\bP$.
\end{proof}
\begin{lemma}[Bound on progress]\label{lem:progress-bound}
	For any $t\geq 0$ of algorithm \ref{algo:ACDM_Appendix},
	$$
		f(\bx_{t+1}) - f(\by_t) \leq - \frac{1}{2L \norm{\bz_t}^2_2}\nabla f(\by_t)^\top\encase{ \bz_t\bz_t^\top} \nabla f(\by_t)\,.
	$$
\end{lemma}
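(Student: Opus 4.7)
\textbf{Proof proposal for Lemma \ref{lem:progress-bound}.}
The plan is to apply the standard descent lemma coming from $L$-smoothness of $f$ with respect to the Euclidean norm, evaluated at the specific step that Algorithm \ref{algo:ACDM_Appendix} takes from $\by_t$ to $\bx_{t+1}$, and then just simplify. Since the update at line 8 uses $\|\bz_t\|_2$ in the denominator, it is Euclidean smoothness (with constant $L$) that is the relevant structural assumption, exactly as in the line-search step of the non-accelerated generalized MP method of Algorithm \ref{algo:MP}.

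First I would write down smoothness in the form
\begin{equation*}
f(\bx_{t+1}) \;\leq\; f(\by_t) + \langle \nabla f(\by_t), \bx_{t+1} - \by_t \rangle + \frac{L}{2}\|\bx_{t+1} - \by_t\|_2^2 ,
\end{equation*}
and then substitute the explicit update
\begin{equation*}
\bx_{t+1} - \by_t \;=\; -\,\frac{\langle \nabla f(\by_t), \bz_t\rangle}{L\|\bz_t\|_2^2}\,\bz_t .
\end{equation*}
The linear term becomes $-\langle \nabla f(\by_t),\bz_t\rangle^2 / (L\|\bz_t\|_2^2)$, while the quadratic term contributes $+\langle \nabla f(\by_t),\bz_t\rangle^2 / (2L\|\bz_t\|_2^2)$, so the two combine to
\begin{equation*}
f(\bx_{t+1}) - f(\by_t) \;\leq\; -\,\frac{\langle \nabla f(\by_t),\bz_t\rangle^2}{2L\|\bz_t\|_2^2} .
\end{equation*}
Finally I would rewrite the numerator in outer-product form, $\langle \nabla f(\by_t),\bz_t\rangle^2 = \nabla f(\by_t)^\top [\bz_t\bz_t^\top]\nabla f(\by_t)$, which gives exactly the stated bound.

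There is essentially no obstacle here: the update rule in the algorithm is precisely the minimizer along $\bz_t$ of the smoothness upper bound, so the lemma is a direct application of that upper bound. The only thing worth double-checking is that this argument is agnostic to whether $\bz_t$ came from the exact LMO (as in Algorithm \ref{algo:ACDM_Appendix}) or any other rule: the bound holds for any fixed direction $\bz_t$, because smoothness is applied pointwise at $\by_t$ and does not invoke the greedy choice property. This makes the lemma a plug-in building block to be combined later with the model-function bound of Lemma \ref{lem:psi-upper} in order to establish Theorem \ref{thm:greedy_acc_pursuit}.
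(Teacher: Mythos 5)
Your proof is correct and follows exactly the same route as the paper's: apply the $L$-smoothness upper bound at $\by_t$ with the explicit step $\bx_{t+1}-\by_t = -\frac{\langle \nabla f(\by_t),\bz_t\rangle}{L\|\bz_t\|_2^2}\bz_t$ and simplify, rewriting $\langle \nabla f(\by_t),\bz_t\rangle^2$ in outer-product form. You even get the sign of the step right (the paper's write-up of $\gamma_{t+1}$ omits the minus sign from the algorithm's update, though the squared terms make the final bound identical), and your remark that the bound holds for any fixed direction $\bz_t$ is accurate.
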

\begin{proof}
	  The update $\bx_{t+1}$ along with the smoothness of $f(\bx)$ guarantees that for $\gamma_{t+1} = \frac{\langle \nabla f(\by_t),\bz_t\rangle}{L\|\bz_t\|^2}$,
	\begin{align*}
	f(\bx_{t+1}) &= f(\by_t + \gamma_{t+1} \bz_t) \\
	&\leq f(\by_t) + \gamma_{t+1}\inner{\nabla f(\by_t)}{\bz_t} + \frac{L\gamma_{t+1}^2}{2}\norm{\bz_t}^2 \\
	&= f(\by_t) - \frac{1}{2L \norm{\bz_t}^2_2}\nabla f(\by_t)^\top\encase{ \bz_t\bz_t^\top} \nabla f(\by_t)\,.
	\end{align*}
\end{proof}
\begin{lemma}[Lower bound on $\psi_t(\bx)$]\label{lem:psi-lower}
Given a filtration $\cF_t$ upto time step $t$, $$
  \bbE[\min_{\bx}\psi_t(\bx)| \cF_t] \geq \beta_t f(\bx_t) \,.
$$
\end{lemma}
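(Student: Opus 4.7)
The plan is induction on $t$. For the base case $t=0$ we have $\beta_0 = 0$ and $\psi_0(\bv_0) = \tfrac12\|\bv_0 - \bv_0\|_\bP^2 = 0$, so the bound holds trivially.

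For the inductive step, I would first exploit Lemma~\ref{lem:v-min-psi}: since $\psi_t$ is a quadratic in $\bx$ with Hessian $\bP$ minimized at $\bv_t$, we may rewrite $\psi_t(\bx) = \psi_t(\bv_t) + \tfrac{1}{2}\|\bx - \bv_t\|_\bP^2$. Substituting $\bx = \bv_{t+1}$, applying the update rule $\bv_{t+1} - \bv_t = -\alpha_{t+1}(\tilde{\bz}_t^\top \nabla f(\by_t))\tilde{\bz}_t$, and expanding $\bv_{t+1} - \by_t = (\bv_{t+1}-\bv_t) + (\bv_t - \by_t)$ inside the definition~\eqref{eq:model_acc_random_paper} yields, after one cancellation,
\begin{align*}
\psi_{t+1}(\bv_{t+1}) &= \psi_t(\bv_t) - \tfrac{\alpha_{t+1}^2}{2}(\tilde{\bz}_t^\top \nabla f(\by_t))^2 \|\tilde{\bz}_t\|_\bP^2 \\
&\quad + \alpha_{t+1} f(\by_t) + \alpha_{t+1}(\tilde{\bz}_t^\top \nabla f(\by_t))(\tilde{\bz}_t^\top \bP(\bv_t - \by_t)).
\end{align*}
Taking the conditional expectation over $\tilde{\bz}_t \sim \cZ$ and using the unbiasedness identity $\bbE[\tilde{\bz}_t\tilde{\bz}_t^\top]\bP = I$ on $\lin(\cA)$ (the same fact exploited in the proof of Lemma~\ref{lem:psi-upper}) collapses the last cross term into $\nabla f(\by_t)^\top(\bv_t - \by_t)$.

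Next I would apply the induction hypothesis $\bbE[\psi_t(\bv_t)\mid \cF_t] \geq \beta_t f(\bx_t)$ together with convexity $f(\bx_t) \geq f(\by_t) + \nabla f(\by_t)^\top(\bx_t - \by_t)$. The two resulting linear terms in $\nabla f(\by_t)^\top$ then combine via the algebraic identity $\beta_t(\bx_t - \by_t) + \alpha_{t+1}(\bv_t - \by_t) = 0$, which follows directly from $\by_t = (1-\tau_t)\bx_t + \tau_t \bv_t$ with $\tau_t = \alpha_{t+1}/\beta_{t+1}$ and $\beta_{t+1} = \beta_t + \alpha_{t+1}$. This produces
\begin{align*}
\bbE[\psi_{t+1}(\bv_{t+1})\mid \cF_t] \geq \beta_{t+1} f(\by_t) - \tfrac{\alpha_{t+1}^2}{2}\bbE\bigl[(\tilde{\bz}_t^\top \nabla f(\by_t))^2 \|\tilde{\bz}_t\|_\bP^2\bigr].
\end{align*}

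Finally, I would invoke Lemma~\ref{lem:progress-bound} which gives $f(\by_t) \geq f(\bx_{t+1}) + \tfrac{(\nabla f(\by_t)^\top \bz_t)^2}{2L\|\bz_t\|_2^2}$, and close the induction by showing that the residual variance term is dominated by the greedy progress term. This reduces to the inequality $\tfrac{\beta_{t+1}(\nabla f(\by_t)^\top \bz_t)^2}{L\|\bz_t\|_2^2} \geq \alpha_{t+1}^2 \bbE[(\tilde{\bz}_t^\top \nabla f(\by_t))^2\|\tilde{\bz}_t\|_\bP^2]$, which follows from the recurrence $\alpha_{t+1}^2 L\nu = \beta_{t+1}$ combined with the definition of $\nu$ applied at $\bd = \nabla f(\by_t)$ (noting that $\bz_t$ realises the maximiser $\bz(\bd)$ up to sign, which is irrelevant as all quantities enter quadratically). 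The main obstacle is precisely this last step: the greedy descent uses the deterministic LMO direction $\bz_t$ whereas the model update uses the independent random sample $\tilde{\bz}_t$, and bridging the two requires the $\nu$-constant, which is engineered so that the calibration $\alpha_{t+1}^2 L\nu = \beta_{t+1}$ exactly balances the deterministic descent gain against the random-sampling variance.
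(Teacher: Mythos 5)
Your proof is correct and follows essentially the same route as the paper's: induction on $t$, the quadratic reformulation of $\psi_t$ about its minimizer $\bv_t$, the unbiasedness identity $\bbE[\tilde{\bz}_t\tilde{\bz}_t^\top]\bP = \bI$ to collapse the cross term, convexity at $\by_t$ combined with the mixing identity from $\by_t = (1-\tau_t)\bx_t + \tau_t\bv_t$, the progress bound of Lemma~\ref{lem:progress-bound}, and the calibration $\alpha_{t+1}^2 L\nu = \beta_{t+1}$. The only cosmetic differences are that you evaluate $\psi_{t+1}$ at $\bv_{t+1}$ via Lemma~\ref{lem:v-min-psi} rather than re-deriving the minimizer, and your final $\nu$-inequality is stated in the form matching the main-text characterization of $\nu$ (without the stray factor of $2L$ that appears in the appendix's intermediate restatement).
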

\begin{proof}
  This too we will show inductively. For $t=0$, $\psi_t(\bx) = \frac{1}{2}\norm{\bx - \bv_0}_\bP^2 \geq 0$ with $\beta_0 =0$. Assume the statement holds for some $t \geq 0$. Recall that $\psi_t(\bx)$ has a minima at $\bv_t$ and can be alternatively formulated as $\psi_t(\bv_t) + \frac{1}{2}\norm{\bx - \bv_t}_\bP^2$. Using this,
  \begin{align*}
    \psi_{t+1}^\star &= \min_{\bx}\encase{\psi_t(\bx) + \alpha_{t+1}\brac{\inner{{\tilde{\bz}}_{t}^\top\nabla f(\by_t)}{{\tilde{\bz}}_{t}^\top \bP(\bx - \by_t)} + f(\by_t)}}\\
    &= \min_{\bx}\encase{\psi_t(\bv_t) + \alpha_{t+1}\brac{\inner{{\tilde{\bz}}_{t}^\top\nabla f(\by_t)}{{\tilde{\bz}}_{t}^\top \bP(\bx - \by_t)} + \frac{1}{2 \alpha_{t+1}}\norm{\bx - \bv_t}_\bP^2 + f(\by_t)}}\\
    &= \psi_t^\star + \alpha_{t+1} f(\by_t) + \alpha_{t+1} \min_{\bx}\encase{\inner{\bP {\tilde{\bz}}_{t}{\tilde{\bz}}_{t}^\top\nabla f(\by_t)}{\bx - \by_t} + \frac{1}{2 \alpha_{t+1}}\norm{\bx - \bv_t}_\bP^2 }\,.
  \end{align*}
  Since we defined $\by_t = (1- \tau_t)\bx_t + \tau_t \bv_t$, rearranging the terms gives us that
  $$
    \by_t - \bv_t = \frac{1 - \tau_t}{\tau_t}(\bx_t - \by_t)\,.
  $$
  Let us take now compute $\bbE[\psi_{t+1}^\star|\cF_t]$ by combining the above two equations:
  \begin{align*}
    \bbE[\psi_{t+1}^\star|\cF_t] &= \psi_t^\star + \alpha_{t+1} f(\by_t) + \frac{\alpha_{t+1} (1 - \tau_t)}{\tau_t} \inner{\bP \bbE_t[\tilde{\bz}_t\tilde{\bz}_t^\top]\nabla f(\by_t)}{\by_t - \bx_t} \\ &\hspace{1in} + \alpha_{t+1}\bbE_t \min_{\bx}\encase{\inner{\bP \tilde{\bz}_t\tilde{\bz}_t^\top\nabla f(\by_t)}{\bx - \bv_t} + \frac{1}{2 \alpha_{t+1}}\norm{\bx - \bv_t}_\bP^2 }\\
    &= \psi_t^\star + \alpha_{t+1} f(\by_t) + \frac{\alpha_{t+1} (1 - \tau_t)}{\tau_t} \inner{\nabla f(\by_t)}{\by_t - \bx_t} \\ &\hspace{1in} + \alpha_{t+1} \bbE_t \min_{\bx}\encase{\inner{\bP \tilde{\bz}_t\tilde{\bz}_t^\top\nabla f(\by_t)}{\bx - \bv_t} + \frac{1}{2 \alpha_{t+1}}\norm{\bx - \bv_t}_\bP^2 }\\
    &= \psi_t^\star + \alpha_{t+1} f(\by_t) + \frac{\alpha_{t+1} (1 - \tau_t)}{\tau_t} \inner{\nabla f(\by_t)}{\by_t - \bx_t} \\ &\hspace{1in} - \frac{\alpha_{t+1}^2}{2} \nabla f(\by_t)^\top \bbE_t\encase{ \tilde{\bz}_t\tilde{\bz}_t^\top \bP \bP^{-1} \bP \tilde{\bz}_t\tilde{\bz}_t^\top}\nabla f(\by_t)\\
    &= \psi_t^\star + \alpha_{t+1} f(\by_t) + \frac{\alpha_{t+1} (1 - \tau_t)}{\tau_t} \inner{\nabla f(\by_t)}{\by_t - \bx_t} \\ &\hspace{1in} - \frac{\alpha_{t+1}^2}{2} \nabla f(\by_t)^\top \bbE_t\encase{ \tilde{\bz}_t\tilde{\bz}_t^\top \bP \tilde{\bz}_t\tilde{\bz}_t^\top}\nabla f(\by_t)\,.
  \end{align*}
  
  Let us define a constant $\nu \geq 0$ such that it is the smallest number for which the below inequality holds for all $t$,
  $$
  \nu \nabla f(\by_t)^\top\frac{\encase{ \bz_t\bz_t^\top}}{2L \norm{\bz_t}^2_2} \nabla f(\by_t) \geq
  \nabla f(\by_t)^\top \bbE\encase{ \tilde{\bz}_t\tilde{\bz}_t^\top \bP \tilde{\bz}_t\tilde{\bz}_t^\top}\nabla f(\by_t)\,.
  $$

  Also recall from Lemma \ref{lem:progress-bound} that
  $$
  f(\bx_{t+1}) - f(\by_t) \leq - \frac{1}{2L \norm{\bz_t}^2_2}\nabla f(\by_t)^\top\encase{ \bz_t\bz_t^\top} \nabla f(\by_t)\,.
  $$
  Using the above two statements in our computation of $\psi_{t+1}^\star$, we get
  \begin{align*}
  \bbE[\psi_{t+1}^\star | \cF_t] &= \psi_t^\star + \alpha_{t+1} f(\by_t) + \frac{\alpha_{t+1} (1 - \tau_t)}{\tau_t} \inner{\nabla f(\by_t)}{\by_t - \bx_t} \\ &\hspace{1in} - \frac{\alpha_{t+1}^2}{2} \nabla f(\by_t)^\top \bbE_t\encase{ \tilde{\bz}_t\tilde{\bz}_t^\top \bP \tilde{\bz}_t\tilde{\bz}_t^\top}\nabla f(\by_t) \\
  &\geq \psi_t^\star + \alpha_{t+1} f(\by_t) + \frac{\alpha_{t+1} (1 - \tau_t)}{\tau_t} \inner{\nabla f(\by_t)}{\by_t - \bx_t} \\ &\hspace{1in} - \frac{\alpha_{t+1}^2 \nu}{2} \nabla f(\by_t)^\top\encase{ \bz_t\bz_t^\top} \nabla f(\by_t)\\
  &\geq \psi_t^\star + \alpha_{t+1} f(\by_t) + \frac{\alpha_{t+1} (1 - \tau_t)}{\tau_t} \inner{\nabla f(\by_t)}{\by_t - \bx_t} \\ &\hspace{1in} + {\alpha_{t+1}^2 L\nu} (f(\bx_{t+1}) - f(\by_t))\\
  &\geq \psi_t^\star + \alpha_{t+1} f(\by_t) + \frac{\alpha_{t+1} (1 - \tau_t)}{\tau_t} (f(\by_t) - f(\bx_t)) \\ &\hspace{1in} + {\alpha_{t+1}^2 L\nu} (f(\bx_{t+1}) - f(\by_t))\,.
  \end{align*}

  Let us pick $\alpha_{t+1}$ such that it satisfies ${\alpha_{t+1}^2}{\nu L} = \beta_{t+1}$. Then the above equation simplifies to
  \begin{align*}
  \bbE[\psi_{t+1}^\star| \cF_t] &\geq \psi_t^\star + \frac{\alpha_{t+1}}{\tau_t} f(\by_t) - \frac{\alpha_{t+1} (1 - \tau_t)}{\tau_t} f(\bx_t) + \beta_{t+1} (f(\bx_{t+1}) - f(\by_t))\\
  &= \psi_t^\star - \beta_t f(\bx_t) + \beta_{t+1}f(\by_t) - \beta_{t+1}f(\by_t) + \beta_{t+1} f(\bx_{t+1}) \\
  &= \psi_t^\star - \beta_t f(\bx_t) + \beta_{t+1} f(\bx_{t+1}) \,.
  \end{align*}

  We used that $\tau_t = \alpha_{t+1}/\beta_{t+1}$. Finally we use the inductive hypothesis to conclude that
  \begin{align*}
  \bbE[\psi_{t+1}^\star| \cF_t] &\geq \psi_t^\star - \beta_t f(\bx_t) + \beta_{t+1} f(\bx_{t+1}) \geq \beta_{t+1} f(\bx_{t+1})\,. \qedhere
  \end{align*}
\end{proof}

\begin{lemma}[Final convergence rate]\label{lem:acc-mp-rate}
	For any $t \geq 1$ the output of algorithm \ref{algo:ACDM_Appendix} satisfies:
  $$
    \bbE[f(\bx_t)] - f(\bx^\star) \leq \frac{2L \nu}{t(t+1)}\norm{\bx^\star - \bx_0}^2_\bP \,.
  $$
\end{lemma}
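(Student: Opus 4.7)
The plan is to chain together Lemmas~\ref{lem:psi-upper} and~\ref{lem:psi-lower} evaluated at the optimizer $\bx^\star$, and then to lower-bound the scalar sequence $\beta_t$ using the recurrence that defines $\alpha_{t+1}$ and $\beta_{t+1}$ in Algorithm~\ref{algo:ACDM_Appendix}. All the heavy work (bounding progress, upper/lower bounds on the model $\psi_t$) has already been done; what remains is an elementary sandwich plus a one-line induction.

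First, applying Lemma~\ref{lem:psi-lower} with $\bx = \bx^\star$ and taking total expectation gives $\beta_t \, \bbE[f(\bx_t)] \leq \bbE[\min_\bx \psi_t(\bx)] \leq \bbE[\psi_t(\bx^\star)]$. By Lemma~\ref{lem:psi-upper}, $\bbE[\psi_t(\bx^\star)] \leq \beta_t f(\bx^\star) + \psi_0(\bx^\star) = \beta_t f(\bx^\star) + \tfrac{1}{2}\|\bx^\star - \bx_0\|_\bP^2$, where I used $\bv_0 = \bx_0$ and the definition $\psi_0(\bx) = \tfrac{1}{2}\|\bx - \bv_0\|_\bP^2$. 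Rearranging gives
\[
\bbE[f(\bx_t)] - f(\bx^\star) \leq \frac{1}{2\beta_t}\,\|\bx^\star - \bx_0\|_\bP^2 \, .
\]

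The remaining task is the growth estimate $\beta_t \geq \tfrac{t(t+1)}{4L\nu}$. The natural substitution is $c_t := 2\sqrt{L\nu \,\beta_t}$, so the defining identity $\alpha_{t+1}^2 L\nu = \beta_{t+1}$ translates to $\alpha_{t+1} = c_{t+1}/(2L\nu)$ and the update $\beta_{t+1} - \beta_t = \alpha_{t+1}$ becomes $c_{t+1}^2 - c_t^2 = 2c_{t+1}$, equivalently $(c_{t+1}-1)^2 = c_t^2 + 1$. Since $\beta_0 = 0$ we have $c_0 = 0$ and $c_1 = 2$; a straightforward induction shows $c_t \geq t+1$ for all $t \geq 1$, because $(c_{t+1}-1)^2 = c_t^2 + 1 \geq (t+1)^2$ forces $c_{t+1} \geq t+2$. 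Consequently $\beta_t \geq (t+1)^2/(4L\nu) \geq t(t+1)/(4L\nu)$, and plugging this into the sandwich bound above yields the claimed rate $\tfrac{2L\nu}{t(t+1)}\|\bx^\star - \bx_0\|_\bP^2$.

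There is no genuine obstacle here; the only subtlety is keeping the filtration clean when combining the two $\psi_t$ estimates, since Lemma~\ref{lem:psi-lower} is conditional on $\cF_t$ while Lemma~\ref{lem:psi-upper} is already in expectation. Taking an outer expectation over $\cF_t$ on the lower bound before comparing with the upper bound resolves this immediately. The growth estimate on $\beta_t$ is the standard Nesterov-style argument adapted to the constant $L\nu$ in place of $L$, and one should double-check the base case $c_1 = 2$ (which follows from $\alpha_1 L\nu = 1$, i.e.\ $\alpha_1 = 1/(L\nu)$ and hence $\beta_1 = 1/(L\nu)$) to ensure the induction has a valid starting point.
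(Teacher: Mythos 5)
Your proposal is correct and follows essentially the same route as the paper: the same sandwich of $\bbE[\psi_t^\star]$ between Lemmas~\ref{lem:psi-upper} and~\ref{lem:psi-lower} evaluated at $\bx^\star$, followed by the Nesterov-style growth bound $\beta_t \geq t(t+1)/(4L\nu)$. Your substitution $c_t = 2\sqrt{L\nu\,\beta_t}$ is only a cosmetic repackaging of the paper's induction on $\alpha_t \geq \tfrac{t}{2L\nu}$ (and in fact yields the marginally sharper $\beta_t \geq (t+1)^2/(4L\nu)$), so the two arguments are the same in substance.
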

\begin{proof}

Putting together Lemmas \ref{lem:psi-upper} and \ref{lem:psi-lower}, we have that
$$
  \beta_t \bbE[f(\bx_t)] \leq \bbE[\psi_t^\star] \leq \bbE[\psi_t(\bx^\star)] \leq \beta_t f(\bx^\star) + \psi_0(\bx^\star)\,.
$$
Rearranging the terms we get
$$
  \bbE[f(\bx_t)] - f(\bx^\star) \leq \frac{1}{2 \beta_t}\norm{\bx^\star - \bx_0}^2_\bP\,.
$$
To finish the proof of the theorem, we only have to compute the value of $\beta_t$.
Recall that
$$
  {\alpha_{t+1}^2}{L\nu} = \beta_t + \alpha_{t+1} \,.
$$
We will inductively show that $\alpha_t \geq \frac{t}{2L \nu}$. For $t=0$, $\beta_0 =0$ and $\alpha_1 = \frac{1}{2L \nu}$ which satisfies the condition. Suppose that for some $t\geq 0$, the inequality holds for all iterations $i \leq t$. Recall that $\beta_t = \sum_{i=1}^t \alpha_i$ i.e. $\beta_t \geq \frac{t(t+1)}{4 L \nu}$. Then
\begin{align*}
  (\alpha_{t+1}L\nu)^2 - \alpha_{t+1}L \nu = \beta_t L \nu
  \geq \frac{t(t+1)}{4}\,.
\end{align*}
The positive root of the quadratic $x^2 - x - c = 0$ for $c \geq 0$ is $x = \frac{1}{2}\brac{1 + \sqrt{4c + 1}}$. Thus
$$
  \alpha_{t+1}L \nu \geq \frac{1}{2}\brac{1 + \sqrt{t(t+1) + 1}} \geq \frac{t + 1}{2}\,.
$$
This finishes our induction and proves the final rate of convergence.
\end{proof}

\begin{lemma}[Understanding $\nu$]
  $$
    \nu \leq \max_{\bd\in\lin(\cA)}
  \frac{\bbE\encase{ (\tilde{\bz}_t^\top \bd)^2 \norm{\tilde{\bz}_t}_\bP^2}\|{\bz(\bd)}\|_2^2}{(\bz(\bd)^\top \bd)^2} \,,
  $$
\end{lemma}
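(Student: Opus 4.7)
The plan is to unwrap the implicit definition of $\nu$ introduced in the proof of Lemma~\ref{lem:psi-lower}, namely that $\nu$ is the smallest nonnegative constant for which
$$
\nu\, \nabla f(\by_t)^\top \frac{\bz_t \bz_t^\top}{\|\bz_t\|_2^2}\,\nabla f(\by_t) \;\geq\; \nabla f(\by_t)^\top \bbE\!\left[\tilde{\bz}_t \tilde{\bz}_t^\top \bP \tilde{\bz}_t \tilde{\bz}_t^\top\right]\nabla f(\by_t)
$$
holds at every iteration~$t$. My first step will be to observe that, because $\nu$ must work uniformly across problem instances and initializations, the gradient $\nabla f(\by_t)$ may point in any direction $\bd\in\lin(\cA)$; the requirement above is therefore equivalent to the same inequality holding for an arbitrary $\bd\in\lin(\cA)$, with $\bz_t$ replaced by the steepest atom $\bz(\bd)=\lmo_\cA(-\bd)$. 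The worst-case $\bd$ then yields an upper bound on $\nu$.

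Next, I will simplify both sides as scalar quadratic forms in $\bd$. For the left-hand side, $\bd^\top \bz\bz^\top \bd = (\bd^\top \bz)^2$ turns the quantity into $\nu\,(\bd^\top \bz(\bd))^2/\|\bz(\bd)\|_2^2$. For the right-hand side, pulling out the scalar $\tilde{\bz}^\top \bP \tilde{\bz}=\|\tilde{\bz}\|_\bP^2$ gives $\tilde{\bz}\tilde{\bz}^\top \bP \tilde{\bz}\tilde{\bz}^\top = \|\tilde{\bz}\|_\bP^2\,\tilde{\bz}\tilde{\bz}^\top$, so by linearity of expectation
$$
\bd^\top\, \bbE\!\left[\tilde{\bz}\tilde{\bz}^\top \bP \tilde{\bz}\tilde{\bz}^\top\right] \bd \;=\; \bbE\!\left[(\tilde{\bz}^\top \bd)^2\, \|\tilde{\bz}\|_\bP^2\right].
$$
Rearranging to isolate $\nu$ and taking the supremum over $\bd\in\lin(\cA)$ gives exactly the claimed bound.

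The identification of $\bz_t$ with $\bz(\bd)$ uses symmetry of $\cA$: since $\lmo_\cA(\bd)=-\lmo_\cA(-\bd)$, for $\bd=\nabla f(\by_t)$ we have $\bz_t=-\bz(\bd)$, but both $(\bd^\top\cdot)^2$ and $\|\cdot\|_2^2$ are invariant under a sign flip, so nothing in the ratio changes.

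The only non-mechanical step is the reduction from the algorithm-dependent quantifier ``for every iteration $t$'' to the $\bd$-uniform quantifier over $\lin(\cA)$; I expect this to follow immediately from the fact that $\nu$ enters the algorithm and its analysis as a worst-case constant, but it deserves a one-sentence justification. The algebraic identities on the two sides are routine, so once the quantifier reduction is in place, the proof is essentially a single line of rearrangement.
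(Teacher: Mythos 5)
Your proposal is correct and follows essentially the same route as the paper, which simply asserts that rearranging the defining inequality for $\nu$ (with $\nabla f(\by_t)$ generalized to an arbitrary $\bd\in\lin(\cA)$ and $\bz_t$ identified with $\bz(\bd)$) yields the stated bound; you merely spell out the quadratic-form algebra, the quantifier reduction, and the sign-flip point that the paper leaves implicit. The only discrepancy is the factor $2L\|\bz_t\|_2^2$ appearing in the paper's displayed definition of $\nu$ but absent from the lemma's conclusion --- an internal inconsistency of the paper itself, which your version (dropping $2L$ from the definition so that it matches the stated bound) resolves in the self-consistent direction.
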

\begin{proof}
  Recall the definition of $\nu$ as a constant which satisfies the following inequality for all iterations $t$
  $$
  \nu \nabla f(\by_t)^\top\frac{\encase{ \bz_t\bz_t^\top}}{2L \norm{\bz_t}^2_2} \nabla f(\by_t) \geq
  \nabla f(\by_t)^\top \bbE\encase{ \tilde{\bz}_t\tilde{\bz}_t^\top \bP \tilde{\bz}_t\tilde{\bz}_t^\top}\nabla f(\by_t)\,.
  $$
which then yields the following sufficient condition for $\nu$:
  $$
  \nu  \leq \max_{\bd\in\lin(\cA)}
  \frac{\bbE\encase{ (\tilde{\bz}_t^\top \bd)^2 \norm{\tilde{\bz}_t}_\bP^2}\|{\bz(\bd)}\|_2^2}{(\bz(\bd)^\top \bd)^2} \,,
  $$
  where $\bz(\bd)$ is defined to be\vspace{-2mm}
  $$
  \bz(\bd) =  \lmo_\cA(- \bd)\,.
  $$
\end{proof}
\paragraph{Proof of Theorem \ref{thm:random_pursuit_acc}.}
The proof of Theorem \ref{thm:random_pursuit_acc} is exactly the same as that of the previous except that now the update to $\bv_t$ is also a random variable. The only change needed is the definition of $\nu'$ where we need the following to hold:
$$
\nu' \nabla f(\by_t)^\top\frac{1}{2L}\bbE_t\encase{ \bz_t\bz_t^\top/{\norm{\bz_t}^2_2}} \nabla f(\by_t) \geq
\nabla f(\by_t)^\top \bbE\encase{ {\bz}_t{\bz}_t^\top \bP {\bz}_t{\bz}_t^\top}\nabla f(\by_t)\,.
$$ \qedhere

\paragraph{Proof of Lemma \ref{lem:coordinate_descent_acc}.}
When $\cA = \{\be_i, i \in [n]\}$ and $\cZ$ is a uniform distribution over $\cA$, then $\tilde{\bP} = 1/n \bI$ and $\bP = n\bI$. A simple computation shows that $\nu' = n$ and $\nu \in [1, n]$. Note that here $\nu$ could be upto $n$ times smaller than $\nu'$ meaning that our accelerated greedy coordinate descent algorithm could be $\sqrt{n}$ times faster than the accelerated random coordinate descent. In the worst case $\nu = \nu'$, but in practice one can pick a smaller $\nu$ compared to $\nu'$ as the worst case gradient rarely happen. It is possible to tune $\nu$ and $\nu'$ empirically but we do not explore this direction.
\end{center}

 \clearpage

\end{document}